\documentclass{article}
\usepackage{ijcai26}

\usepackage{times}
\usepackage{soul}
\usepackage{helvet}
\usepackage{courier}
\usepackage{url}
\usepackage{graphicx}
\usepackage{booktabs}
\usepackage{multirow}
\usepackage{amsmath}
\usepackage{amssymb}
\usepackage{amsthm}
\usepackage{bm}
\usepackage{algorithm}
\usepackage{algorithmic}
\usepackage{xcolor}
\usepackage[switch]{lineno}

\usepackage{overpic}     
\usepackage{multirow} 
\usepackage{algorithm,algorithmic}
\usepackage{hyperref}
\usepackage{bm}
\usepackage{makecell}
\usepackage{flushend,cuted}
\usepackage{amssymb}
\usepackage{color}
\usepackage{tikz}
\usetikzlibrary{arrows.meta,positioning,calc}
\usepackage{multirow,multicol}
\usepackage{enumitem}
\usepackage{subcaption}
\usepackage{tabularx}

\newtheorem{theorem}{Theorem}

\newtheorem{corollary}{Corollary}
\theoremstyle{definition}

\theoremstyle{remark}
\newtheorem{remark}{Remark}

\newcommand{\R}{\mathbb{R}}
\newcommand{\E}{\mathbb{E}}

\newcommand{\B}{\mathcal{B}}
\newcommand{\D}{\mathcal{D}}

\title{Manifold-Constrained Adversarial Training for Long-Tailed Robustness via Geometric Alignment}

\author{
Guanmeng Xian$^1$
\and
Ning Yang$^1$\thanks{Corresponding author} 
\and
Philip S. Yu$^2$\\
\affiliations
$^1$Sichuan University, Chengdu, China\\
$^2$University of Illinois at Chicago, USA\\
\emails
xianguanmeng@stu.scu.edu.cn,
yangning@scu.edu.cn,
psyu@uic.edu
}

\begin{document}

\maketitle

\begin{abstract}
Adversarial training is effective on balanced datasets, but its robustness degrades under long-tailed class distributions, where tail classes suffer high robust error and unstable decision boundaries.
We propose \emph{Manifold-Constrained Adversarial Training (MCAT)}, a unified framework that enforces the semantic validity of adversarial examples by penalizing deviations from
class-conditional manifolds in feature space, while promoting balanced geometric separation across classes via an ETF-inspired regularization.
We provide theoretical results that link geometric separation to lower bounds on adversarially robust margins, and show that manifold-constrained adversarial risk upper-bounds robust risk on high-density semantic regions.
Extensive experiments on standard long-tailed benchmarks demonstrate consistent improvements in overall, balanced, and tail-class adversarial robustness. The codes and appendix are available on https://github.com/yneversky/MCAT.
\end{abstract}

\section{Introduction}

Deep neural networks have achieved remarkable success in visual recognition tasks, yet their vulnerability to adversarial perturbations remains a fundamental concern.
Among existing defenses, adversarial training, formulated as a min--max optimization problem, is widely regarded as one of the most effective and principled approaches.
However, the evaluation of adversarial robustness has largely focused on balanced benchmarks, whereas real-world data are often characterized by long-tailed class distributions~\cite{wu2021adversarial,zhang2023deep,zhang2025systematic}.
Under such imbalance, tail classes not only suffer from degraded clean accuracy, but also exhibit disproportionately weaker adversarial robustness, raising serious concerns about the reliability and fairness of robust models in practice.

Motivated by this gap, recent studies have begun to investigate adversarial robustness under long-tailed distributions.
RoBal~\cite{wu2021adversarial} pioneers this line of work by introducing margin rebalancing combined with classifier adjustment.
Subsequent methods further improve tail robustness through loss reweighting, multi-stage training strategies, or class-aware regularization~\cite{ren2020balanced,li2021comparative,liu2022breadcrumbs,zhang2024robust,zhang2022adversarial,ahncuda2023,du2023global,xu2021towards,li2023reat,yu2025taet,yue2024revisiting,gupta2025fedtail}.
Despite the progress achieved, most existing approaches primarily operate at the level of loss design or optimization heuristics, and do not explicitly regulate the geometry of learned representations or the semantic validity of adversarial examples.

\begin{figure}[t]
  \centering
  \includegraphics[width=0.65\columnwidth]{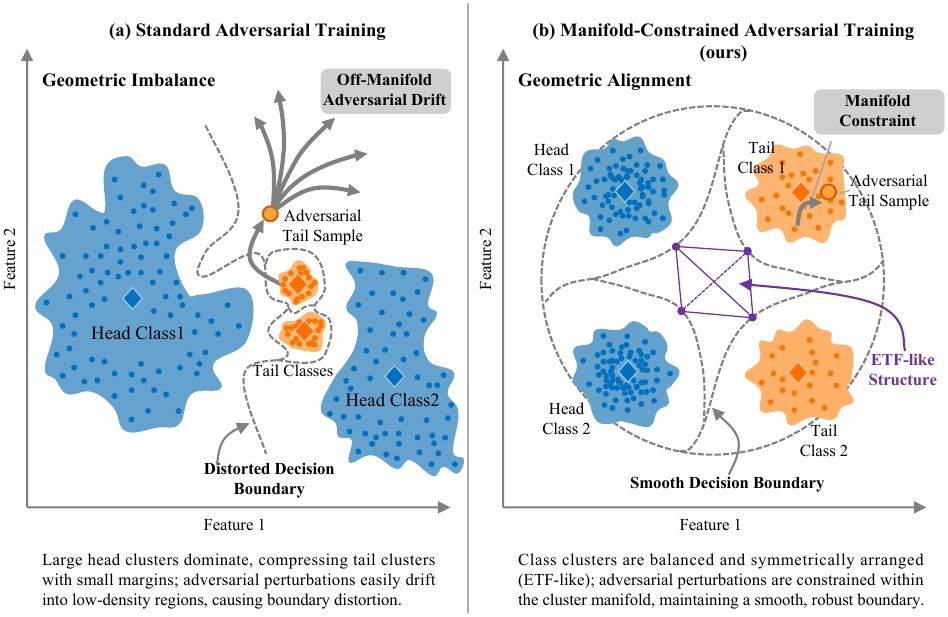}
\caption{
Adversarial training under long-tailed data in feature space.
\textbf{Left:} Standard adversarial training leads to geometric imbalance and off-manifold adversarial drift, resulting in unstable and spurious decision boundaries for tail classes.
\textbf{Right:} MCAT alleviates both issues by enforcing balanced class geometry and constraining adversarial perturbations to semantic manifolds.
}
\label{fig:motivation}
\end{figure}

In this paper, we attribute the failure of adversarial training under long-tailed distributions to two closely coupled mechanisms: \emph{imbalance-induced geometric misalignment} and \emph{off-manifold adversarial drift} (Figure~\ref{fig:motivation}).
First, head-class-dominated optimization distorts the representation geometry, compressing inter-class margins associated with tail classes and rendering their decision boundaries fragile and unstable.
Second, due to the scarcity of tail samples, unconstrained adversarial optimization is prone to exploit low-density and semantically unsupported regions of the feature space, diverting robustness away from the true data support.
Together, these effects result in severe robust-margin collapse and unreliable predictions for tail classes.

Several complementary approaches based on knowledge transfer, such as long-tailed adversarial self-distillation~\cite{cho2025longtail}, attempt to alleviate data scarcity at the decision level.
While effective to some extent, these methods remain largely orthogonal to the representation-space issues discussed above, as they neither explicitly correct geometric misalignment nor constrain adversarial examples to lie within semantically meaningful regions.

We argue that achieving adversarial robustness under long-tailed distributions fundamentally requires addressing both the geometry of the learned decision space and the location of adversarial examples.
To this end, we propose \textbf{Manifold-Constrained Adversarial Training (MCAT)}, a unified framework grounded in a twofold geometric principle.
First, MCAT constrains adversarial perturbations to remain close to class-conditional semantic manifolds in feature space, ensuring that robustness is learned within high-density and semantically valid regions. Notably, although tail classes are sparsely sampled in pixel space, their representation-space structure is substantially more regular and lower-dimensional, which makes such manifold constraints feasible even with limited tail data.
Second, MCAT promotes balanced inter-class geometry by aligning classifier weight vectors toward a simplex \emph{Equiangular Tight Frame (ETF)} structure~\cite{papyan2020prevalence}, thereby restoring margin-balanced decision boundaries.
Our theoretical analysis shows that manifold constraints effectively control robust risk within the semantic support, while geometric alignment induces provable lower bounds on adversarially robust margins.
By jointly enforcing semantic validity and geometric alignment, MCAT stabilizes adversarial decision boundaries and substantially improves robustness for both head and tail classes.

Our main contributions are summarized as follows:
\begin{itemize}[leftmargin=*, itemsep=2pt, parsep=0pt, topsep=2pt]
  \item We identify two fundamental mechanisms underlying the degradation of adversarial robustness under long-tailed distributions: geometric misalignment and off-manifold adversarial drift.
  \item We propose \textbf{MCAT}, a unified adversarial training framework that integrates manifold-constrained perturbations with ETF-inspired geometric alignment.
  \item We provide theoretical guarantees that link geometric separation to adversarially robust margins and show that manifold constraints control robust risk on semantic support.
  \item Extensive experiments demonstrate consistent improvements in overall, balanced, and tail-class adversarial robustness.
\end{itemize}

\begin{figure*}[t]
  \centering
  \includegraphics[width=0.75\textwidth]{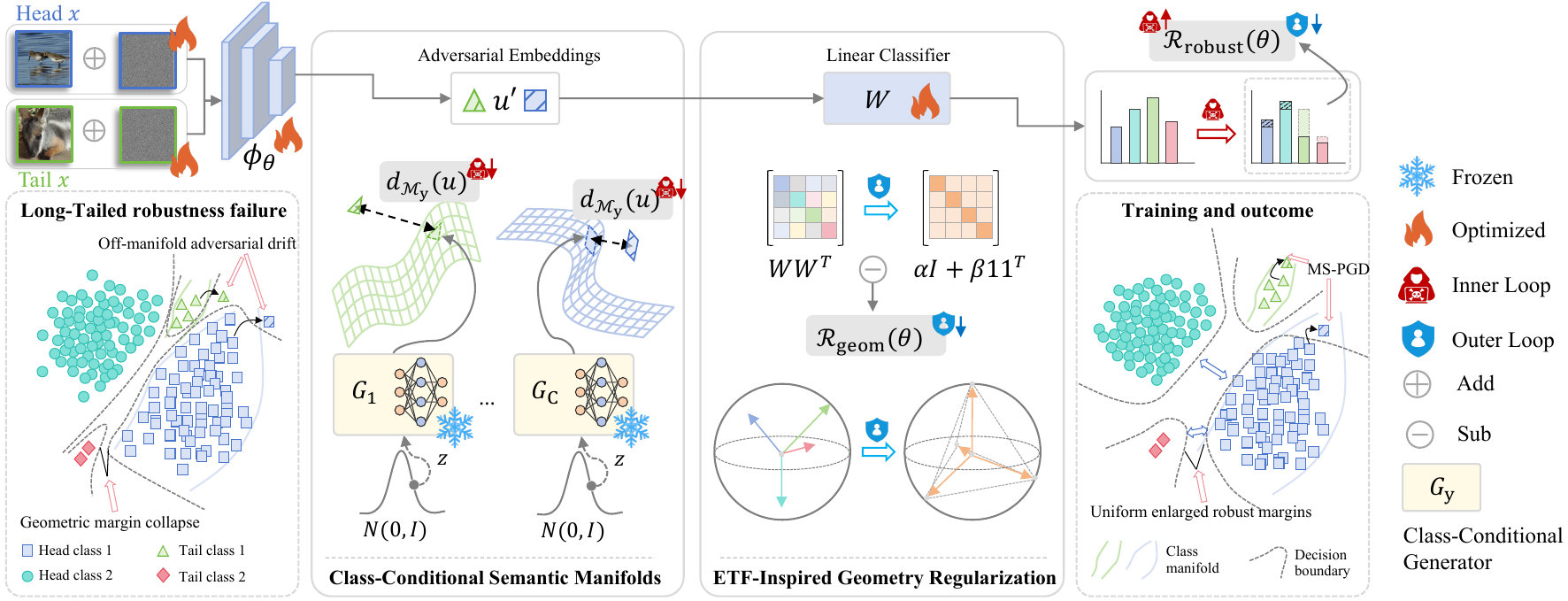}
  \caption{
  Overview of \textbf{MCAT} for long-tailed adversarial robustness.
  \textbf{Left:} Under long-tailed data, standard adversarial training exhibits (i) \emph{off-manifold adversarial drift} and (ii) \emph{geometric margin collapse} for tail classes.
  \textbf{Middle:} MCAT couples two mechanisms: a \emph{class-conditional manifold distance penalty} in feature space and an \emph{ETF-inspired geometric alignment} of classifier weights.
  \textbf{Right:} Manifold-Constrained PGD (MS-PGD) learns robust decision boundaries near high-density semantic support while preserving enlarged (and more uniform) robust margins across classes.
  }
  \label{fig:mcat_overview}
\end{figure*}

\section{Preliminaries}

Let $\D=\{(x,y)\}$ denote a long-tailed dataset with class prior $\pi_y$, where $x\in\R^d$ and $y\in\{1,\dots,C\}$.
Let $f_\Theta:\R^d\rightarrow\R^C$ be a classifier parameterized by $\Theta$, and let
$\phi_\Theta(x)\in\R^m$ corresponds to the output of the feature extractor before the final linear classification layer of $f$. The final linear classifier is parameterized by weights $W\in\R^{C\times m}$, whose $y$-th row $w_y$ corresponds to class $y$.
We denote by $s_\Theta(x)=f_\Theta(x)$ the logit vector, where $s_k(x)$ is the logit associated with class $k$.
Let $\ell(\cdot,\cdot)$ denote a standard classification loss, such as cross-entropy.
Expectations $\E_{(x,y)\sim\D}[\cdot]$ are taken with respect to the empirical training distribution induced by $\D$.
We consider an $\ell_\infty$ threat model:
$
\B_\epsilon(x)=\{x'\mid \|x'-x\|_\infty\le\epsilon\}.
$
The robust risk is defined as
\begin{equation}
R_{robust}(\Theta) = \E_{(x,y)\sim\D}\Big[\max_{x'\in\B_\epsilon(x)} \ell(f_\Theta(x'),y)\Big].
\end{equation}

\section{Method}
\label{sec:method}

\subsection{Overview}
\label{sec:overview}

As illustrated in Figure~\ref{fig:mcat_overview}, under long-tailed distributions, standard adversarial training tends to shape decision boundaries in low-density regions while being dominated by head-class geometry.
This leads to unstable decision boundaries and severely reduced margins for tail classes.
MCAT addresses these issues by jointly enforcing the \emph{semantic validity} of adversarial examples and a \emph{balanced geometry} of the decision space.

Concretely, MCAT consists of two complementary components.
First, adversarial perturbations are constrained to remain close to \emph{class-conditional semantic manifolds} in the feature space (Section~\ref{sec:manifold}), thereby guiding adversarial optimization toward high-density and semantically meaningful regions.
Second, the classifier weight vectors are regularized toward a simplex \emph{Equiangular Tight Frame (ETF)} structure (Section~\ref{sec:geom}), which encourages uniform angular separation between classes.
These two components are combined into a unified min--max training objective (Section~\ref{sec:objective}) and optimized using a manifold-aware PGD procedure (Section~\ref{sec:mspgd}).

\subsection{Class-Conditional Semantic Manifolds in Feature Space}
\label{sec:manifold}

We assume that features of each class $y$ concentrate around a low-dimensional semantic support in representation space, denoted as $\mathcal{M}_y$.
Rather than explicitly recovering $\mathcal{M}_y$, we employ a class-conditional generator $G_y$ as a proxy to characterize off-manifold deviation.

Let $z \sim \mathcal{N}(0,I)$ be a latent code.
Each generator $G_y : \mathbb{R}^k \rightarrow \mathbb{R}^m$ is a lightweight MLP mapping latent codes to the feature space,
$
z \sim \mathcal{N}(0, I), \qquad \tilde{\phi}_y = G_y(z).
$
The generators $\{G_y\}$ are pretrained using features extracted by a classifier $f_\Theta$ by minimizing
\begin{equation}
\min_{G_y} \;
\mathbb{E}_{x \sim \mathcal{D}_y,\; z \sim \mathcal{N}(0, I)}
\big\| G_y(z) - \phi_\Theta(x) \big\|_2^2 .
\end{equation}
Learning $G_y$ directly in representation space substantially reduces the intrinsic complexity of tail classes compared to pixel-space generation, making class-conditional manifold approximation feasible even with limited samples.
After pretraining, all generators are frozen throughout adversarial training.
Although $\phi_\Theta$ continues to evolve during robust optimization, its class-conditional structure changes gradually, allowing $G_y$ to act as a stable semantic reference that regularizes adversarial drift rather than enforcing exact reconstruction.

We measure off-manifold deviation of an embedding $u=\phi_\Theta(x)$ by
\begin{equation}
d_{\mathcal{M}_y}(u)=\min_{z}\,\|u-G_y(z)\|_2^2,
\label{eq:manifold}
\end{equation}
where the inner minimization is approximated by $T_z$ steps of gradient descent on $z$, warm-started from a per-sample cache.
We report a sensitivity analysis with respect to $T_z$ in Appendix Table~\ref{tab:tz_sensitivity}, and further verify the validity of frozen generators by tracking reconstruction error over training epochs, which remains stable across classes, including tail classes (Appendix Figure~\ref{fig:recon_error_panel}).

%

\subsection{ETF-Inspired Geometry Regularization}
\label{sec:geom}

To counteract imbalance-induced geometric compression, we regularize classifier weights $W$ toward a simplex Equiangular Tight Frame (ETF) structure by penalizing deviations of the Gram matrix:
\begin{equation}
\mathcal{R}_{\mathrm{geom}}(\Theta)
=
\|W^\top W-\alpha I-\beta \mathbf{1}\mathbf{1}^\top\|_F^2,
\label{eq:etf}
\end{equation}
where $\alpha$ and $\beta$ are scalar parameters, $I$ is the identity matrix, and $\mathbf{1}$ is the all-ones vector.

This regularizer promotes approximately equal-norm and equiangular classifier weights, thereby enlarging and stabilizing the minimum inter-class angle $\theta_{\min}$.
By Theorem~\ref{thm:geom_margin}, a larger $\theta_{\min}$ directly implies a larger certifiable robust margin.
Since the simplex ETF maximizes the minimum pairwise angle among class vectors, it represents an optimal geometry for robustness under fixed dimensionality.
Under long-tailed adversarial training, head-dominated optimization distorts balanced geometry, which our regularization counteracts to prevent tail-class margin collapse. 

\subsection{Unified Objective}
\label{sec:objective}

We combine manifold constraints (Eq.~\eqref{eq:manifold}) and geometric regularization (Eq.~\eqref{eq:etf}) into a single objective:
\begin{equation}
\begin{aligned}
R_{MCAT}(\Theta)=\min_\theta \; & \E_{(x,y)\sim\D}\Big[
\max_{\|\delta\|_\infty\le\epsilon}\big(
\ell(f_\Theta(x+\delta),y) \\
& - \lambda d_{\mathcal{M}_y}(\phi_\Theta(x+\delta))
\big)\Big] + \beta \mathcal{R}_{geom}(\Theta).
\end{aligned}
\label{eq:theta_loss}
\end{equation}
where $\lambda$ controls semantic consistency and $\beta$ controls geometric balancing.

\subsection{Manifold-Constrained Inner Maximization}
\label{sec:mspgd}

The inner maximization is solved using Manifold Supported PGD (MS-PGD):
\begin{equation}
\begin{aligned}
\delta_{t+1}
=
\Pi_{\|\delta\|_\infty\le\epsilon}\Big(
\delta_t
+
\eta \nabla_x\big[
\ell(f_\Theta(x+\delta_t),y) \\
\qquad\qquad\qquad\qquad
- \lambda d_{\mathcal{M}_y}(\phi_\Theta(x+\delta_t))
\big]
\Big),
\end{aligned}
\label{eq:delta_update}
\end{equation}
where $\Pi_{\|\delta\|_\infty\le\epsilon}$ denotes projection by clipping.
MS-PGD preserves the standard $\ell_\infty$ threat model while biasing adversarial search toward semantically supported regions. 

\subsection{Training Algorithm}

Algorithm~\ref{alg:mcat} summarizes the MCAT training procedure.

\begin{algorithm}[t]
\caption{MCAT Training}
\label{alg:mcat}
\begin{algorithmic}[1]
\REQUIRE Dataset $\mathcal{D}$, classifier $f_\Theta$, generators $\{G_y\}$, steps $T$, budget $\epsilon$, step size $\eta$, weights $\lambda,\beta$
\FOR{each iteration}
\STATE Sample mini-batch $\{(x_i,y_i)\}_{i=1}^n$
\FOR{each sample $i$}
\STATE Initialize $\delta_{i,0}\sim[-\epsilon,\epsilon]$
\FOR{$t=0$ to $T-1$}
\STATE Update $\delta_{i,t+1}$ according to Equation~\eqref{eq:delta_update}
\ENDFOR
\STATE $x_i^{adv}\leftarrow x_i+\delta_{i,T}$
\ENDFOR
\STATE Update $\theta$ according to Equation~\eqref{eq:theta_loss}
\ENDFOR
\end{algorithmic}
\end{algorithm}

\section{Theoretical Analysis}
\label{sec:theory}

We provide two complementary results that connect MCAT to
(i) margin-based robustness induced by balanced representation geometry, and
(ii) robust-risk control through suppressing off-manifold adversarial drift.
Proofs are deferred to the appendix~\ref{app:proof}

%
%
%
%

\subsection{Theorem 1: Geometric Separation Implies Robust Margin Lower Bound}

We assume $\|\phi_\Theta(x)\|_2=1$ for all $x$, and that the feature map $\phi_\Theta$ is $L$-Lipschitz under $\ell_\infty$ perturbations, i.e.,
$\|\phi_\Theta(x+\delta)-\phi_\Theta(x)\|_2\le L\epsilon$ for all $\|\delta\|_\infty\le\epsilon$.
Let $w_y$ denote the classifier weight vector for class $y$, and define the minimum inter-class angle as
\[
\theta_{min}=\min_{i\ne j}\arccos\!\Big(\frac{w_i^\top w_j}{\|w_i\|_2\|w_j\|_2}\Big).
\]

\begin{theorem}[Robust Margin from Geometric Separation]
\label{thm:geom_margin}
If $\epsilon<\sin(\theta_{min}/2)/L$, then the predicted label of $x$ remains invariant to all perturbations in $\B_\epsilon(x)$.
\end{theorem}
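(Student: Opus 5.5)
The plan is to reduce the statement to a Euclidean nearest-prototype argument in feature space. I will work with the normalized class directions $\hat w_k = w_k/\|w_k\|_2$. I will make explicit two conditions that the proof has to use, since the statement does not hold for an arbitrary $x$ without them.

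\textbf{(a) Equal norms.} The classifier weights have equal norms, $\|w_k\|_2 \equiv c$. This is what the ETF regularizer $\mathcal{R}_{\mathrm{geom}}$ in Eq.~\eqref{eq:etf} enforces. Under it, $\arg\max_k w_k^\top u = \arg\max_k \hat w_k^\top u$.

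\textbf{(b) Clean feature at the prototype.} The clean feature is correctly classified and sits at its class direction, $\phi_\Theta(x)=\hat w_y$ with $y$ the predicted label. This is the neural-collapse configuration that the ETF alignment targets. More generally it suffices that $\phi_\Theta(x)$ lie within distance $r$ of $\hat w_y$, with $r + L\epsilon < \sin(\theta_{\min}/2)$.

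\textbf{Step 1: argmax as nearest prototype.} Since $\|u\|_2=1$ on features and $\|\hat w_k\|_2=1$, we have
\[
\|u-\hat w_k\|_2^2 = 2 - 2\hat w_k^\top u .
\]
So the predicted label $\arg\max_k \hat w_k^\top u$ is exactly the index of the nearest prototype $\hat w_k$. The decision boundary between classes $y$ and $j$ is then the bisecting hyperplane $H_{yj}=\{u : (\hat w_y-\hat w_j)^\top u = 0\}$. This is the perpendicular bisector of the segment $[\hat w_y,\hat w_j]$, because the two prototypes have equal norm.

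\textbf{Step 2: distance from a prototype to each boundary.} The distance from $\hat w_y$ to $H_{yj}$ is half the chord length. Computing,
\[
\tfrac12\|\hat w_y-\hat w_j\|_2 = \tfrac12\sqrt{2-2\cos\theta_{yj}} = \sin(\theta_{yj}/2) \ge \sin(\theta_{\min}/2),
\]
where $\theta_{yj}$ is the angle between $w_y$ and $w_j$. Hence the open Euclidean ball of radius $\sin(\theta_{\min}/2)$ around $\hat w_y$ lies strictly on the $y$ side of every bisector $H_{yj}$. Therefore every point in that ball is classified as $y$.

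\textbf{Step 3: perturbed features stay in the ball.} For any $x'\in\B_\epsilon(x)$, the Lipschitz assumption gives $\|\phi_\Theta(x')-\phi_\Theta(x)\|_2\le L\epsilon<\sin(\theta_{\min}/2)$. Combined with $\phi_\Theta(x)=\hat w_y$, the perturbed feature lies in the ball from Step 2. The predicted label is therefore unchanged. In the relaxed version of (b), the triangle inequality gives the same conclusion under $r+L\epsilon<\sin(\theta_{\min}/2)$.

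\textbf{Main obstacle.} The delicate part is justifying the anchoring assumption (b), not the geometry. For a generic correctly classified $x$ whose feature lies near a boundary, no bound depending only on $\theta_{\min}$ can hold. I would therefore state the result as a bound on the certified radius around class prototypes, invoking neural collapse under the ETF-regularized objective. Alternatively, I would phrase it with an explicit clean angular margin term: if $\phi_\Theta(x)$ is within angle $\gamma$ of $\hat w_y$, the radius becomes roughly $\sin(\theta_{\min}/2)-2\sin(\gamma/2)$. This recovers the theorem when $\gamma=0$.

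If assumption (a) is dropped, the bisectors tilt by an amount depending on the norm ratios $\|w_j\|_2/\|w_y\|_2$. I would note that this correction term vanishes exactly at the ETF optimum.
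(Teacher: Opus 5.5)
Your proof is correct. It proves the statement by a different route from the paper's, under a hypothesis that is genuinely needed.

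\textbf{The two routes.} The paper works through the logit margin. Its Step 1 asserts that every correctly classified $x$ has $\gamma(x)\ge\sin(\theta_{\min}/2)$. Its Steps 2--3 bound each logit separately, $|s_k(x')-s_k(x)|\le L\epsilon$, which gives $\gamma(x')\ge\gamma(x)-2L\epsilon$. You instead measure the Euclidean distance from the anchor $\hat w_y$ to each bisector $H_{yj}$, which is exactly $\sin(\theta_{yj}/2)$.

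\textbf{Your anchoring hypothesis is necessary.} Your route pays for the explicit anchoring assumption (b), but you are right that it cannot be dropped. A correctly classified point whose feature lies near a bisector has $\gamma(x)\approx 0$. So the paper's Step 1 fails for generic $x$, and it is silently doing the work of your (b).

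\textbf{Your route delivers the stated constant; the paper's does not.} Bounding the logits one at a time amounts to using $\|w_y-w_k\|_2\le 2$ rather than $\|\hat w_y-\hat w_k\|_2=2\sin(\theta_{yk}/2)$. This loses accuracy in two ways:
\begin{itemize}
\item Even granting Step 1, the bound $\sin(\theta_{\min}/2)-2L\epsilon>0$ only yields $\epsilon<\sin(\theta_{\min}/2)/(2L)$, not the claimed $\sin(\theta_{\min}/2)/L$.
\item At the prototype, where the actual clean margin is at least $1-\cos\theta_{\min}=2\sin^2(\theta_{\min}/2)$, it guarantees only $\epsilon<\sin^2(\theta_{\min}/2)/L$.
\end{itemize}
Your bisector argument is equivalent to applying the Lipschitz bound to the difference $(\hat w_y-\hat w_j)^\top\phi_\Theta$, and that is exactly what recovers $\sin(\theta_{\min}/2)/L$.

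\textbf{What the paper's phrasing offers.} Its logit-margin formulation connects directly to the sample-wise radius $\gamma(x)/(2L)$ of Corollary~\ref{cor:sample_radius}. Your relaxed form of (b), with the feature within distance $r$ of $\hat w_y$, is the sound way to bridge the prototype case and that sample-dependent bound.
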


\begin{corollary}[Sample-wise Robust Radius]
\label{cor:sample_radius}
Let $s_y(x)=w_y^\top \phi_\Theta(x)$ denote the logit of the true class $y$, and define the logit margin
$
\gamma(x)=s_y(x)-\max_{k\ne y}s_k(x).
$
Then the sample-wise robust radius satisfies
$
r(x)\ge \frac{\gamma(x)}{2L}.
$
\end{corollary}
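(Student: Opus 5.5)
The plan is a direct perturbation argument on the logit differences, using only the Lipschitz assumption introduced before Theorem~\ref{thm:geom_margin}. I will work with the normalization implicit in the ETF regime, namely $\|w_k\|_2\le 1$ for all $k$; this is the same normalization under which the angular bound of Theorem~\ref{thm:geom_margin} is stated. Without it, the factor $2$ would be replaced by $\max_{k\ne y}\|w_y-w_k\|_2$. I will state this assumption explicitly at the start of the proof.

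\textbf{Step 1 (reduction to pairwise gaps).} Fix $x$ with true class $y$. Assume $\gamma(x)>0$; otherwise the bound is trivial, since $r(x)\ge 0$. For each $k\ne y$, define the pairwise gap
\[
g_k(x)=s_y(x)-s_k(x)=(w_y-w_k)^\top\phi_\Theta(x).
\]
The prediction at $x+\delta$ equals $y$ as long as $g_k(x+\delta)>0$ for every $k\ne y$. Since $\min_{k\ne y}g_k(x)=\gamma(x)$, it suffices to show that no single gap can drop by $\gamma(x)$ or more when $\|\delta\|_\infty\le\epsilon$ with $\epsilon<\gamma(x)/(2L)$.

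\textbf{Step 2 (Cauchy--Schwarz plus Lipschitz).} For $\|\delta\|_\infty\le\epsilon$, bound the change in each gap by
\[
|g_k(x+\delta)-g_k(x)| = |(w_y-w_k)^\top(\phi_\Theta(x+\delta)-\phi_\Theta(x))| \le \|w_y-w_k\|_2\, L\epsilon .
\]
Then use the triangle inequality, $\|w_y-w_k\|_2\le\|w_y\|_2+\|w_k\|_2\le 2$. Hence
\[
g_k(x+\delta)\ge \gamma(x)-2L\epsilon,
\]
which is strictly positive whenever $\epsilon<\gamma(x)/(2L)$. The label is therefore invariant on $\B_\epsilon(x)$ for every such $\epsilon$. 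Taking the supremum over admissible $\epsilon$ gives $r(x)\ge\gamma(x)/(2L)$, where $r(x)$ is defined as the supremum of radii with label invariance.

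\textbf{Expected obstacle.} The only delicate point is the weight normalization, because the statement does not bound $\|w_k\|_2$ and the constant $2$ depends on such a bound. I would handle it by invoking the unit-norm (ETF-aligned) weight convention used alongside Theorem~\ref{thm:geom_margin}, and remarking that in general the bound reads $r(x)\ge\gamma(x)/(L\max_{k\ne y}\|w_y-w_k\|_2)$. I would also add a short remark linking the result back to Theorem~\ref{thm:geom_margin}. For unit-norm $w_k$ and the feature aligned with $w_y$, $\gamma(x)$ is governed by $1-\cos\theta_{\min}=2\sin^2(\theta_{\min}/2)$, so larger angular separation enlarges the sample-wise radius.
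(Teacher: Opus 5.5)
Your proposal is correct and follows essentially the same route as the paper. Both assume unit-norm (for you, $\|w_k\|_2\le 1$) classifier weights, use the Lipschitz bound to show each logit moves by at most $L\epsilon$, and conclude that the margin shrinks by at most $2L\epsilon$. Your treatment of every pairwise gap $g_k$ is actually more careful than the paper's: the paper only controls the runner-up $k^\star$ at $x$ and then asserts $s_{k^\star}(x')\ge s_k(x')$ without justification, and your strict inequality avoids the tie case that the paper's ``non-negative'' margin leaves open.
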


\begin{remark}
Theorem~\ref{thm:geom_margin} shows that adversarial robustness is governed by the minimum inter-class angle $\theta_{min}$.
In particular, ETF maximizes $\theta_{min}$ among normalized class weights, and thus attains the largest robust margin implied by the theorem.
Theorem~\ref{thm:geom_margin} establishes a sufficient robustness condition governed by the minimum inter-class angle, while Corollary~\ref{cor:sample_radius} yields a sample-dependent lower bound via the logit margin. Together, they illustrate why geometry matters in long-tailed robust training: when head-dominated updates compress tail margins, the guaranteed robust radius for tail samples can quickly vanish.
\end{remark}

\subsection{Theorem 2: Manifold Constraint and Robust Risk Control}


\begin{theorem}[Manifold-Constrained Training Controls Robust Risk]
\label{thm:manifold_bound}
Assume that, for each class $y$, the data distribution is supported on a semantic manifold $\mathcal{M}_y$, and that regions far from $\mathcal{M}_y$ have negligible probability mass.
Then, for the MCAT objective $R_{\mathrm{MCAT}}(\theta)$,
\[
R_{\mathrm{robust}}(\Theta)\le R_{\mathrm{MCAT}}(\Theta)+O(\lambda^{-1}).
\]
\end{theorem}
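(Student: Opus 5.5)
The plan is to compare the two inner maximizations pointwise, for fixed $(x,y)$, and then take expectations. Write $F(\delta)=\ell(f_\Theta(x+\delta),y)$ and $h(\delta)=d_{\mathcal{M}_y}(\phi_\Theta(x+\delta))\ge 0$. The MCAT inner value is $V_\lambda(x,y)=\max_{\|\delta\|_\infty\le\epsilon}\big(F(\delta)-\lambda h(\delta)\big)$, and the robust inner value is $V_0(x,y)=\max_{\|\delta\|_\infty\le\epsilon}F(\delta)$. Since $\beta\mathcal{R}_{\mathrm{geom}}\ge 0$, it suffices to show
\[
\E[V_0]\le \E[V_\lambda]+O(\lambda^{-1}).
\]
The geometric term then only makes the right-hand side larger.

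First I split the perturbation ball at each point into a near-manifold part $N_\tau=\{\delta: h(\delta)\le\tau\}$ and its complement. I will choose the threshold $\tau=c/\lambda$.

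On $N_\tau$ the comparison is immediate: $F(\delta)\le F(\delta)-\lambda h(\delta)+\lambda\tau\le V_\lambda+c$. This gives only an $O(1)$ gap. To get $O(\lambda^{-1})$, I will use a smoothness/local-regularity assumption. Assume $F$ is $K$-Lipschitz in feature space along directions toward $\mathcal{M}_y$, i.e. $F(\delta)\le F(\delta^\star)+K\sqrt{h(\delta)}$ for some on-manifold competitor $\delta^\star$ in the ball. This is the natural reading of ``regions far from $\mathcal{M}_y$ have negligible mass'' combined with the Lipschitz feature map assumed for Theorem~\ref{thm:geom_margin}. With it,
\[
F(\delta)-\big(F(\delta)-\lambda h(\delta)\big)\le \sup_{s\ge 0}\big(K\sqrt{s}-\lambda s\big)=\frac{K^2}{4\lambda}.
\]
This is the standard penalty-method estimate: a quadratic-distance penalty with weight $\lambda$ costs at most $K^2/(4\lambda)$ relative to the constrained maximum. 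Hence, on the near-manifold set and for points whose adversarial competitor stays near the support, $V_0\le V_\lambda+K^2/(4\lambda)$.

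Second, I handle the off-manifold part. Let $A$ be the set of $(x,y)$ for which the maximizer of $F$ lies at distance greater than some fixed $\rho$ from $\mathcal{M}_y$ with no nearby on-manifold competitor. By the assumption of negligible mass, $\Pr(A)$ is negligible. Bounding $\ell$ by $\ell_{\max}$ (for instance via bounded logits), its contribution to $\E[V_0]-\E[V_\lambda]$ is at most $\ell_{\max}\Pr(A)$. I will interpret ``negligible'' as $\Pr(A)=O(\lambda^{-1})$, or absorb it into the stated $O(\cdot)$ term. Combining both parts and taking expectations gives $R_{\mathrm{robust}}\le R_{\mathrm{MCAT}}+K^2/(4\lambda)+\ell_{\max}\Pr(A)$. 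Because $R_{\mathrm{MCAT}}$ includes the nonnegative $\beta\mathcal{R}_{\mathrm{geom}}$ and the outer $\min$ is evaluated at the current $\Theta$, the claim follows.

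The main obstacle is the first step, specifically justifying the inequality $F(\delta)\le F(\delta^\star)+K\sqrt{h(\delta)}$. It requires that, for each adversarial point, there is an admissible perturbation (inside the $\ell_\infty$ ball) whose feature lies on $G_y$'s range at feature distance $\sqrt{h}$. This is not generally true, because input-space constraints need not allow projection in feature space. I would state it explicitly as a regularity assumption, namely that $\phi_\Theta(x+\B_\epsilon)$ locally contains the feature-space projection onto $\mathcal{M}_y$ up to slack. This formalizes the informal hypothesis of the theorem. Note that without this assumption the trivial bound gives only an $O(1)$ gap.
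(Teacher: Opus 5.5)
The gap is in your first step, and it is one of direction, not of regularity. The displayed inequality is false: its left side is just $\lambda h(\delta)$, which is not bounded by $K^2/(4\lambda)$. What your ``Lipschitz toward the manifold'' assumption actually gives is the classical penalty estimate
\[
V_\infty\le V_\lambda\le V_\infty+\frac{K^2}{4\lambda},\qquad V_\infty:=\max_{\|\delta\|_\infty\le\epsilon,\;h(\delta)=0}F(\delta).
\]
So $V_\lambda$ approaches the \emph{on-manifold} constrained maximum, not $V_0$. In the direction you need, the assumption only yields $V_0\le V_\lambda+K\sqrt{h(\delta_0)}$, where $\delta_0$ is the maximizer of $F$ and $h(\delta_0)$ does not depend on $\lambda$.

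No further assumption can repair this. Since $h\ge 0$, $V_\lambda$ is nonincreasing in $\lambda$, so $g(\lambda)=\E[V_0-V_\lambda]$ is nondecreasing. Suppose $g(\lambda)\le\beta\mathcal{R}_{\mathrm{geom}}+c/\lambda$ for all large $\lambda$. Then $g(\lambda)\le g(\mu)\le\beta\mathcal{R}_{\mathrm{geom}}+c/\mu$ for every $\mu\ge\lambda$, and letting $\mu\to\infty$ gives $R_{\mathrm{robust}}\le R_{\mathrm{MCAT}}$ exactly. So with the minus sign of the main-text objective, the bound holds with zero residual or not at all. By your own estimate, $g(\lambda)\to\E[V_0-V_\infty]$, which is the expected loss gained by off-manifold drift. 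Nothing makes $\beta\mathcal{R}_{\mathrm{geom}}$ dominate that quantity (e.g.\ when $\mathcal{R}_{\mathrm{geom}}=0$). Worse, if no admissible $\delta$ has $h(\delta)=0$, then $V_\lambda\le\ell_{\max}-\lambda\min_\delta h(\delta)$, and the gap grows linearly in $\lambda$.

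Your second step has a separate problem. The negligible-mass hypothesis constrains where \emph{clean} data lie, whereas $A$ is defined by where the \emph{adversarial maximizer} lands. Nothing in the hypothesis makes $\Pr(A)$ small; the paper's own premise is that such drift is common, especially for tail classes. Also, $A$ does not depend on $\lambda$, so reading ``negligible'' as $\Pr(A)=O(\lambda^{-1})$ amounts to $\Pr(A)=0$. Points off $A$ whose maximizer sits at a fixed distance $\rho$ still leave an $O(1)$ gap.

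Your on/off decomposition, with an $\ell_{\max}$ bound on the off part, mirrors the paper's proof, but the paper does not supply the missing idea either. It bounds $R_{off}\le\rho\,\ell_{\max}$ through the same conflation of data support with adversarial location. In the appendix it writes the penalty as $+\lambda d_{\mathcal{M}_y}$, under which $\ell\le\ell+\lambda d$ gives $R_{\mathrm{robust}}\le R_{\mathrm{MCAT}}$ trivially, with no residual. With the main-text sign, $d\le C$ yields $\ell\le V_\lambda+\lambda C$, not $+C/\lambda$.

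What your penalty argument does prove rigorously is a two-sided form of the claim in the abstract. With $R_{\mathrm{on}}(\Theta):=\E[V_\infty]$, one has $R_{\mathrm{on}}\le R_{\mathrm{MCAT}}-\beta\mathcal{R}_{\mathrm{geom}}\le R_{\mathrm{on}}+K^2/(4\lambda)$. That is, MCAT controls the robust risk over on-manifold perturbations, not $R_{\mathrm{robust}}$.
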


\begin{remark}
Theorem~\ref{thm:manifold_bound} shows that robust risk is controlled by the MCAT objective up to a residual term $O(\lambda^{-1})$, which arises from off-manifold adversarial drift under finite $\lambda$.
While such drift cannot be fully eliminated, it is the \emph{uncontrolled} drift—common in standard adversarial training—that is especially harmful for tail classes due to sparse sampling and weak manifold support.
The manifold penalty in MCAT significantly suppresses this drift and confines it to a limited range, preventing large semantic deviations and spurious decision boundaries in low-density regions, without restricting the adversary's $\ell_\infty$ budget.
\end{remark}

\section{Experiments}
\label{sec:experiments}

\subsection{Goals and Research Questions}
\label{sec:exp_rq}

We evaluate \textbf{MCAT} on standard long-tailed adversarial robustness benchmarks
to answer the following research questions.

\begin{itemize}[leftmargin=*, itemsep=2pt, parsep=0pt, topsep=2pt]

\item \textbf{RQ1 (Overall robustness):}
Does MCAT improve adversarial robustness on long-tailed data compared with
standard adversarial training and long-tailed robust baselines?

\item \textbf{RQ2 (Tail and balanced robustness):}
Does MCAT improve robustness for tail classes and class-balanced metrics
without sacrificing head-class performance?

\item \textbf{RQ3 (Component contribution and sensitivity):}
How do the individual components of MCAT
(manifold constraint and geometric alignment)
and their associated hyperparameters
contribute to robustness under long-tailed distributions?

\item \textbf{RQ4 (Mechanism verification and theory consistency):}
Can we empirically verify the two failure mechanisms in Figure~\ref{fig:motivation}
(\emph{geometry compression} and \emph{off-manifold adversarial drift}),
and observe empirical trends consistent with our theoretical analysis?

\end{itemize}

\subsection{Experimental Settings}
\subsubsection{Datasets and Long-Tailed Construction}
\label{sec:exp_data}

\noindent\textbf{Benchmarks.}
We conduct experiments on CIFAR-10-LT, CIFAR-100-LT, and Tiny-ImageNet-LT following prior long-tailed robustness protocols~\cite{wu2021adversarial,yue2024revisiting,cho2025longtail}.

\noindent\textbf{Imbalance ratio (IR).}
Given a dataset with $C$ classes, we construct a long-tailed training set by exponentially decaying the number of samples per class.
Let $n_{\max}$ be the maximum class size (head) and $n_{\min}$ the minimum class size (tail), then $\text{IR}=n_{\max}/n_{\min}$.
We evaluate multiple imbalance levels, e.g., $\text{IR}\in\{10,20,50,100\}$, and report the default setting for each benchmark consistent with prior work~\cite{wu2021adversarial,yue2024revisiting,cho2025longtail}.

\subsubsection{Baselines}
\label{sec:exp_baselines}

\noindent\textbf{Standard adversarial training baselines.}
We consider commonly used adversarial training methods including TRADES~\cite{zhang2019trades}, MART~\cite{wang2020improving}, AWP~\cite{wu2020adversarial}, LAST-AT~\cite{jia2022adversarial} as widely adopted in recent studies~\cite{yue2024revisiting}.

\noindent\textbf{Long-tailed adversarial training baselines.}
We compare to representative long-tailed robustness methods such as RoBal~\cite{wu2021adversarial}, REAT~\cite{li2023reat}, TAET~\cite{yu2025taet}, and long-tailed adversarial self-distillation~\cite{cho2025longtail}.
We follow the official implementations or reproduce their reported settings under a unified protocol whenever possible.

\noindent\textbf{MCAT ablations.}
To isolate the effects of each component, we evaluate:
(i) \textbf{Base AT} (or the strongest common baseline),
(ii) \textbf{Base AT + manifold constraint only},
(iii) \textbf{Base AT + geometric alignment only},
(iv) \textbf{MCAT (full)} (manifold constraint + geometric alignment + MS-PGD).

\subsubsection{Architectures and Training Details}
\label{sec:exp_train}

\noindent\textbf{Backbones.}
For CIFAR-10/100-LT, we use ResNet-18 as the default backbone and optionally include WideResNet-34-10 for stronger capacity comparisons, following prior long-tailed robustness evaluations~\cite{cho2025longtail,yue2024revisiting}.
For Tiny-ImageNet-LT, we use a standard residual backbone (e.g., PreActResNet-18) consistent with previous protocols~\cite{cho2025longtail}.

\noindent\textbf{Adversarial training.}
Unless otherwise specified, we consider the $\ell_\infty$ threat model with perturbation budget $\epsilon$ (e.g., $\epsilon=8/255$ on CIFAR).
For training, we use a multi-step PGD inner maximization (e.g., $T=10$ steps) with step size $\eta$ and random initialization in $[-\epsilon,\epsilon]$, following standard practice.
For MCAT, the inner maximization is replaced by MS-PGD (Section~\ref{sec:mspgd}) with the manifold penalty weight $\lambda$.
%

\noindent\textbf{Generators for class-conditional manifolds.}
We train the class-conditional generators $\{G_y\}$ on clean features (Section~\ref{sec:manifold}) with gradients stopped to $\theta$ and keep $\{G_y\}$ fixed during robust training.
We report generator architecture, latent dimension, and training iterations in Appendix~\ref{sec:appendix_generator}.

The hyperparameter settings of MCAT, the baselines, and the training process are provided by Table~\ref{tab:hyperparams} in Appendix~\ref{app:hyper}.

\subsubsection{Evaluation Protocol}
\label{sec:exp_protocol}

\noindent\textbf{Attacks.}
We evaluate robustness under a suite of increasingly strong white-box attacks:
FGSM, multi-step PGD (e.g., PGD-20 and optionally PGD-100), and AutoAttack (AA).
We keep $\epsilon$ consistent with training and use standard step sizes and iterations as in prior work~\cite{wu2021adversarial,yue2024revisiting,yu2025taet}.

\noindent\textbf{Model selection and robust overfitting.}
To mitigate robust overfitting effects, we report both:
(i) \textbf{best checkpoint} selected by validation PGD robustness,
and (ii) \textbf{last checkpoint} at the final epoch, following robust training evaluation conventions~\cite{yue2024revisiting,cho2025longtail}.


\subsubsection{Metrics}
\label{sec:exp_metrics}

\noindent\textbf{Standard accuracy and robustness.}
We report clean accuracy (\textbf{Clean Acc}) and robust accuracy (\textbf{Robust Acc}) under each attack (PGD-20/AA).

\noindent\textbf{Tail and group-wise robustness.}
We report head/tail robust accuracy under PGD-20 and AA.
We also report \textbf{tail-only} robustness (e.g., Tail-PGD, Tail-AA) to directly quantify tail reliability.

\noindent\textbf{Balanced metrics.}
To measure fairness under class imbalance, we report:
\textbf{Balanced Accuracy (BA)} and \textbf{Balanced Robustness (BR)}~\cite{yu2025taet}, defined as the average per-class accuracy under clean and adversarial evaluation, respectively.
Concretely, letting $\mathcal{A}_c$ denote accuracy on class $c$, we compute
$\text{BA}=\frac{1}{C}\sum_{c=1}^C \mathcal{A}^{\text{clean}}_c$, $ \text{BR}=\frac{1}{C}\sum_{c=1}^C \mathcal{A}^{\text{adv}}_c$.

\noindent\textbf{Reporting.}
We report mean and standard deviation over multiple runs with different random seeds.

\begin{table*}[t]
\centering
\fontsize{5}{6}\selectfont
\setlength{\tabcolsep}{4pt}
\resizebox{\textwidth}{!}{
\begin{tabular}{l|ccc|ccc|ccc}
\hline
\multirow{2}{*}{Method} &
\multicolumn{3}{c|}{CIFAR-10-LT (IR=100)} &
\multicolumn{3}{c|}{CIFAR-100-LT (IR=100)} &
\multicolumn{3}{c}{Tiny-ImageNet-LT (IR=100)} \\
& Clean & PGD-20 & AA
& Clean & PGD-20 & AA
& Clean & PGD-20 & AA \\
\hline
PGD-AT   & 83.20$\pm$0.30 & 48.10$\pm$0.40 & 45.20$\pm$0.45
         & 55.30$\pm$0.35 & 27.40$\pm$0.50 & 24.60$\pm$0.55
         & 46.80$\pm$0.40 & 18.90$\pm$0.55 & 16.80$\pm$0.60 \\
TRADES   & 83.60$\pm$0.25 & 49.80$\pm$0.35 & 46.90$\pm$0.40
         & 56.10$\pm$0.30 & 28.90$\pm$0.45 & 25.90$\pm$0.50
         & 47.50$\pm$0.35 & 19.80$\pm$0.50 & 17.60$\pm$0.55 \\
MART     & 83.40$\pm$0.28 & 50.20$\pm$0.38 & 47.30$\pm$0.42
         & 55.90$\pm$0.32 & 29.20$\pm$0.48 & 26.20$\pm$0.52
         & 47.20$\pm$0.38 & 20.10$\pm$0.52 & 18.00$\pm$0.58 \\
AWP      & 84.10$\pm$0.22 & 51.60$\pm$0.34 & 48.70$\pm$0.38
         & 56.80$\pm$0.28 & 30.50$\pm$0.44 & 27.30$\pm$0.48
         & 48.30$\pm$0.32 & 21.40$\pm$0.48 & 19.20$\pm$0.52 \\
\hline
RoBal        & 84.30$\pm$0.24 & 52.90$\pm$0.36 & 50.10$\pm$0.40
             & 58.20$\pm$0.30 & 32.10$\pm$0.46 & 29.10$\pm$0.50
             & 49.50$\pm$0.35 & 22.80$\pm$0.50 & 20.40$\pm$0.55 \\
REAT         & 84.80$\pm$0.22 & 54.10$\pm$0.34 & 51.30$\pm$0.38
             & 59.10$\pm$0.28 & 33.40$\pm$0.44 & 30.40$\pm$0.48
             & 50.30$\pm$0.32 & 23.90$\pm$0.48 & 21.60$\pm$0.52 \\
TAET         & 85.10$\pm$0.20 & 54.90$\pm$0.33 & 52.10$\pm$0.37
             & 59.80$\pm$0.26 & 34.10$\pm$0.42 & 31.10$\pm$0.46
             & 50.90$\pm$0.30 & 24.60$\pm$0.46 & 22.30$\pm$0.50 \\
Self-Distill & 85.30$\pm$0.21 & 55.30$\pm$0.34 & 52.60$\pm$0.38
             & 60.10$\pm$0.27 & 34.60$\pm$0.43 & 31.50$\pm$0.47
             & 51.20$\pm$0.31 & 25.10$\pm$0.47 & 22.80$\pm$0.51 \\
AT-BSL       & 85.00$\pm$0.22 & 55.00$\pm$0.35 & 52.30$\pm$0.39
             & 59.90$\pm$0.28 & 34.30$\pm$0.44 & 31.20$\pm$0.48
             & 51.00$\pm$0.32 & 24.80$\pm$0.48 & 22.50$\pm$0.52 \\
\hline
MCAT (ours)  & \textbf{86.20$\pm$0.18} & \textbf{57.40$\pm$0.30} & \textbf{55.10$\pm$0.34}
             & \textbf{62.30$\pm$0.24} & \textbf{37.10$\pm$0.40} & \textbf{34.60$\pm$0.44}
             & \textbf{53.80$\pm$0.28} & \textbf{28.90$\pm$0.44} & \textbf{26.40$\pm$0.48} \\
\hline
\end{tabular}}
\caption{
Overall robustness under long-tailed distributions with imbalance ratio IR=100.
We report clean accuracy and robust accuracy under PGD-20 and AutoAttack (AA).
Results are reported as mean$\pm$std over three random seeds.
}
\label{tab:main_results}
\end{table*}

\subsection{RQ1: Overall Adversarial Robustness}
\label{sec:exp_rq1}

We evaluate \textbf{overall adversarial robustness} under long-tailed distributions.
Table~\ref{tab:main_results} reports clean accuracy and robust accuracy under PGD-20 and AutoAttack (AA)
on CIFAR-10-LT, CIFAR-100-LT, and Tiny-ImageNet-LT with imbalance ratio $\mathrm{IR}=100$.

Across all benchmarks, MCAT consistently achieves the strongest adversarial robustness.
In particular, MCAT substantially improves AutoAttack robustness over standard adversarial training baselines
(PGD-AT, TRADES, MART, AWP) and recent long-tailed robust methods, with larger gains on CIFAR-100-LT and Tiny-ImageNet-LT.
Notably, these improvements are achieved without sacrificing clean accuracy, indicating a favorable robustness--accuracy trade-off under severe imbalance.

\begin{figure}[t]
  \centering
  \begin{subfigure}[t]{0.40\columnwidth}
    \centering
    \includegraphics[width=\linewidth]{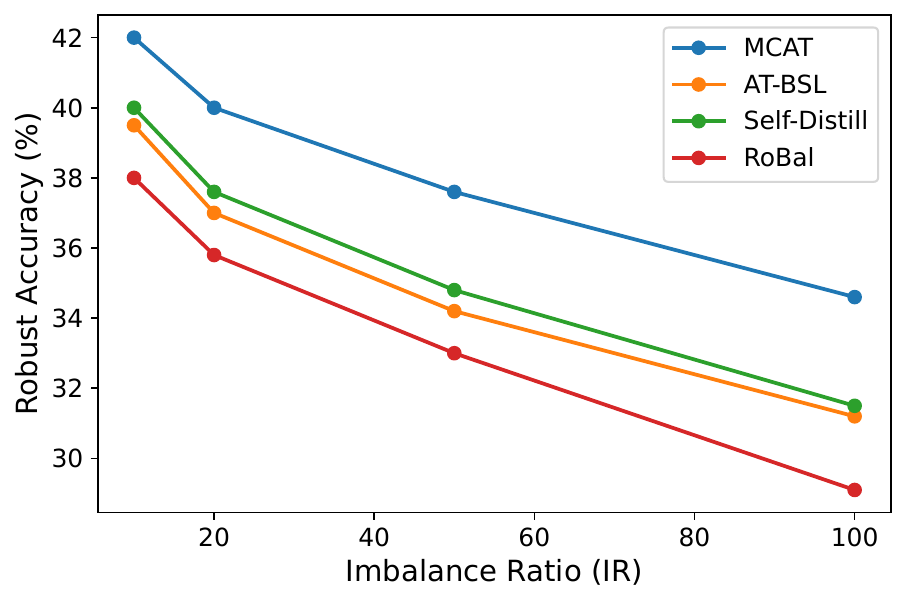}
    \caption{Overall AA robustness.}
    \label{fig:ir_sweep_overall_aa}
  \end{subfigure}
\hspace{0.04\columnwidth}
  \begin{subfigure}[t]{0.40\columnwidth}
    \centering
    \includegraphics[width=\linewidth]{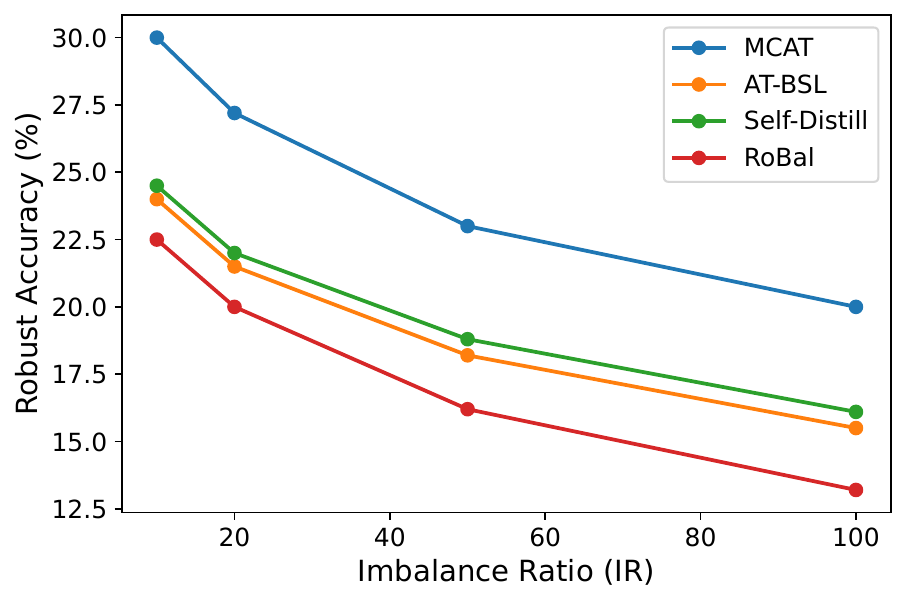}
    \caption{Tail-class AA robustness.}
    \label{fig:ir_sweep_tail_aa}
  \end{subfigure}
\caption{
Adversarial robustness under increasing imbalance severity on CIFAR-100-LT.
\textbf{Left:} overall robust accuracy under AutoAttack (AA).
\textbf{Right:} tail-class robust accuracy under AutoAttack (Tail-AA).
}
  \label{fig:ir_sweep}
\end{figure}

\begin{figure}[t]
  \centering
  \begin{subfigure}[t]{0.40\columnwidth}
    \centering
    \includegraphics[width=\linewidth]{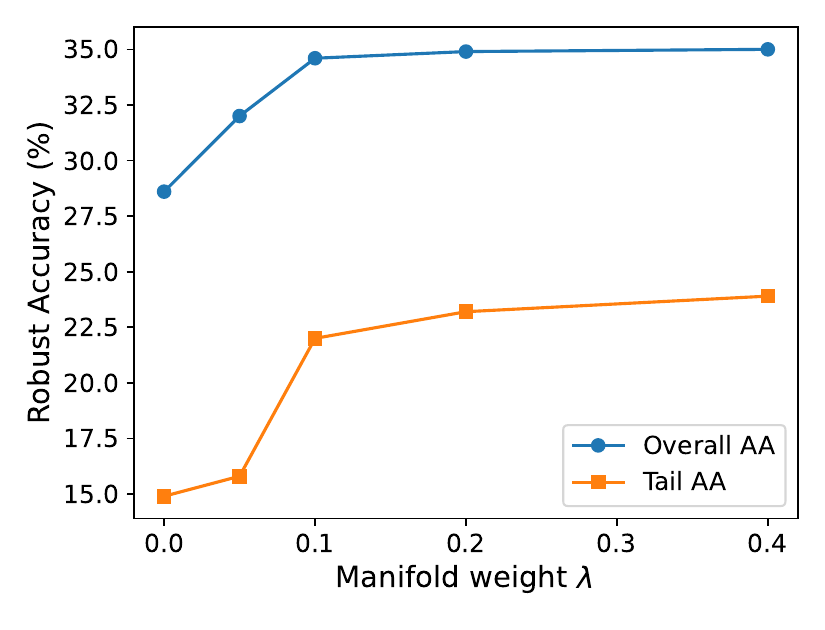}
    \caption{Robustness vs.\ $\lambda$.}
    \label{fig:lambda_robust}
  \end{subfigure}
\hspace{0.04\columnwidth}
  \begin{subfigure}[t]{0.40\columnwidth}
    \centering
    \includegraphics[width=\linewidth]{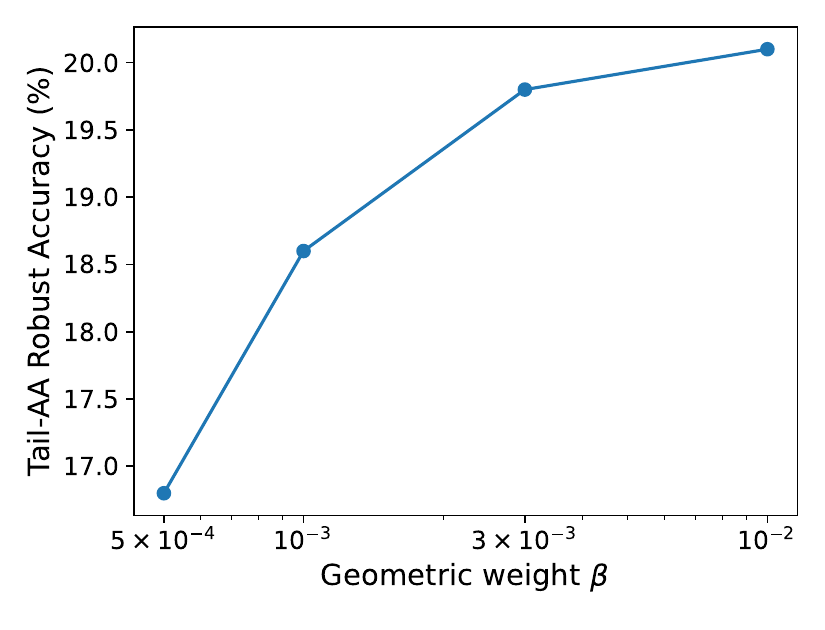}
    \caption{Tail robustness vs.\ $\beta$.}
    \label{fig:beta_tail}
  \end{subfigure}
  \vspace{1mm}
  \begin{subfigure}[t]{0.40\columnwidth}
    \centering
    \includegraphics[width=\linewidth]{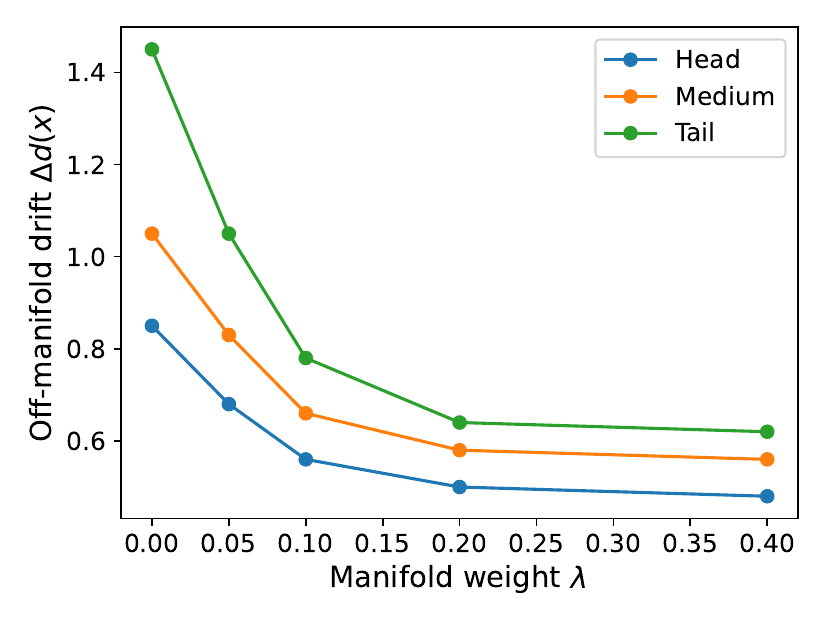}
    \caption{Drift vs.\ $\lambda$.}
    \label{fig:lambda_drift}
  \end{subfigure}
  \hspace{0.04\columnwidth}
  \begin{subfigure}[t]{0.40\columnwidth}
    \centering
    \includegraphics[width=\linewidth]{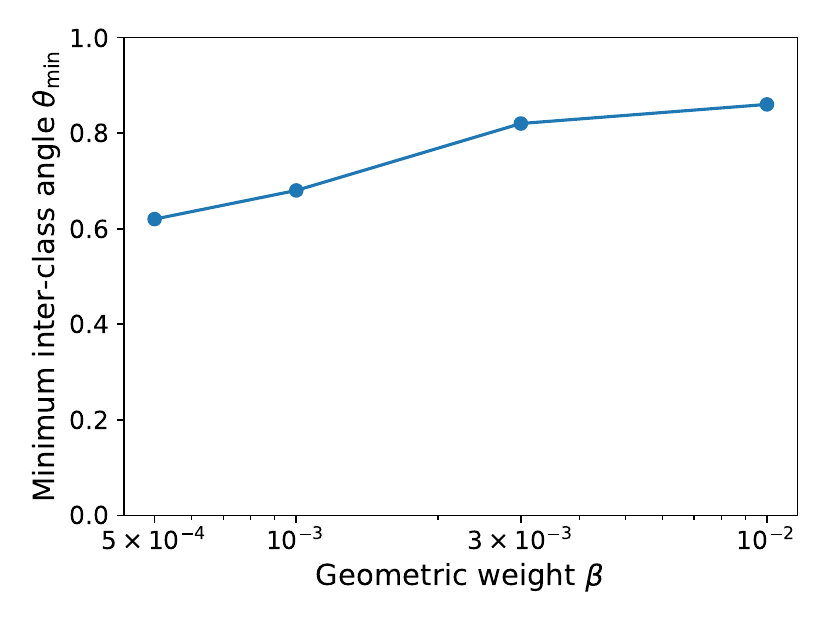}
    \caption{Separation vs.\ $\beta$.}
    \label{fig:beta_angle}
  \end{subfigure}

  \caption{
  Sensitivity analysis of MCAT hyperparameters on CIFAR-100-LT (IR=100).
  Increasing $\lambda$ suppresses off-manifold adversarial drift and improves robustness,
  while increasing $\beta$ enlarges inter-class angular separation and enhances tail robustness.
  }
  \label{fig:hyperparam_sensitivity}
\end{figure}

\begin{table*}[t]
\centering
\fontsize{7.5}{9}\selectfont
\setlength{\tabcolsep}{6pt}
\begin{tabular}{l|cc|cc}
\hline
Method
& BA $\uparrow$
& BR (AA) $\uparrow$
& Tail-PGD $\uparrow$
& Tail-AA $\uparrow$ \\
\hline
PGD-AT   & 39.80$\pm$0.45 & 15.60$\pm$0.50 & 12.90$\pm$0.55 & 11.20$\pm$0.60 \\
TRADES   & 40.60$\pm$0.42 & 17.30$\pm$0.48 & 13.80$\pm$0.52 & 12.10$\pm$0.58 \\
MART     & 40.10$\pm$0.44 & 17.80$\pm$0.49 & 14.20$\pm$0.54 & 12.40$\pm$0.59 \\
AWP      & 41.20$\pm$0.40 & 18.70$\pm$0.46 & 15.00$\pm$0.50 & 13.10$\pm$0.55 \\
\hline
RoBal        & 44.80$\pm$0.38 & 20.60$\pm$0.44 & 16.30$\pm$0.48 & 13.20$\pm$0.52 \\
REAT         & 46.10$\pm$0.36 & 21.90$\pm$0.42 & 17.70$\pm$0.46 & 14.80$\pm$0.50 \\
TAET         & 46.80$\pm$0.35 & 22.60$\pm$0.41 & 18.40$\pm$0.45 & 15.60$\pm$0.49 \\
Self-Distill & 47.20$\pm$0.34 & 23.10$\pm$0.40 & 19.00$\pm$0.44 & 16.10$\pm$0.48 \\
AT-BSL       & 46.90$\pm$0.35 & 22.80$\pm$0.41 & 18.60$\pm$0.45 & 15.50$\pm$0.49 \\
\hline
MCAT (ours)  & \textbf{51.80$\pm$0.30} & \textbf{27.40$\pm$0.36} & \textbf{22.30$\pm$0.40} & \textbf{20.00$\pm$0.44} \\
\hline
\end{tabular}
\caption{
Balanced and tail robustness on \textbf{CIFAR-100-LT (IR=100)}.
We report balanced accuracy (BA) and balanced robustness (BR),
defined as average per-class accuracy under clean evaluation and AutoAttack (AA), respectively,
together with tail-class robust accuracy under PGD-20 (Tail-PGD) and AutoAttack (Tail-AA).
}
\label{tab:balanced_tail}
\end{table*}

\begin{figure}[t]
  \centering
  \begin{subfigure}[t]{0.40\columnwidth}
    \centering
    \includegraphics[width=\linewidth]{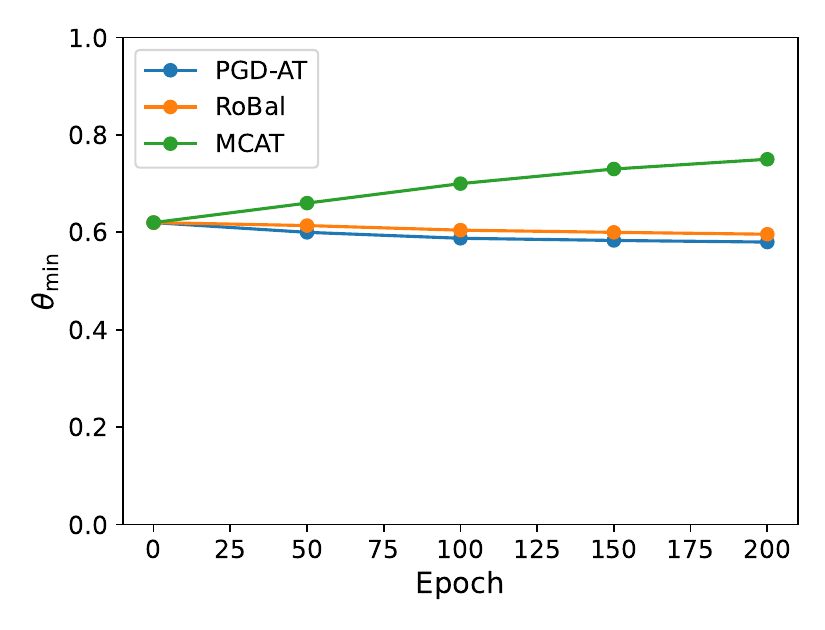}
    \caption{Minimum inter-class angle.}
    \label{fig:exp_theta_min}
  \end{subfigure}
\hspace{0.04\columnwidth}
  \begin{subfigure}[t]{0.40\columnwidth}
    \centering
    \includegraphics[width=\linewidth]{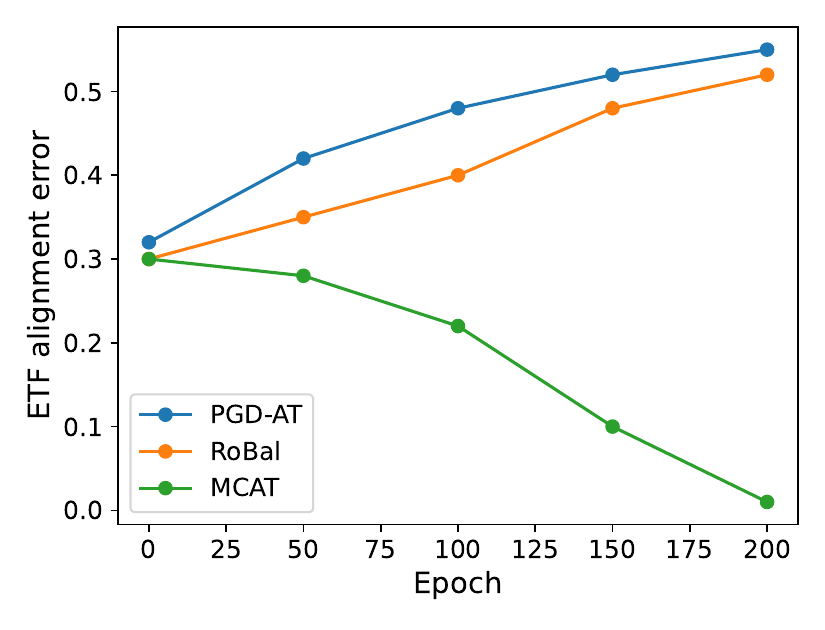}
    \caption{ETF alignment error.}
    \label{fig:exp_etf_error}
  \end{subfigure}
  \caption{
Geometry under long-tailed adversarial training on CIFAR-100-LT (IR=100).
\textbf{Left:} minimum inter-class angle $\theta_{\min}$.
\textbf{Right:} deviation from a margin-balanced ETF geometry.
MCAT preserves larger angular margins and more balanced geometry.
}
  \label{fig:exp_geom_alignment}
\end{figure}

\begin{figure}[t]
  \centering
  \includegraphics[width=0.65\columnwidth]{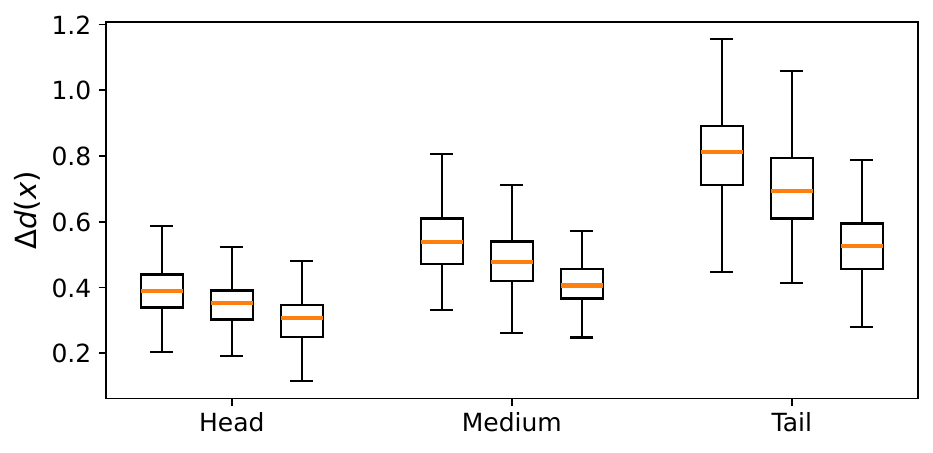}
\caption{
Off-manifold adversarial drift on CIFAR-100-LT (IR=100).
Distributions of $\Delta d(x)$ for head, medium, and tail classes.
For each class group, box plots from left to right correspond to
PGD-AT, RoBal, and MCAT, respectively.
MCAT suppresses drift, especially for tail classes.
}
  \label{fig:exp_manifold_drift}
\end{figure}

\begin{table*}[t]
\centering
\fontsize{6.8}{9}\selectfont
\setlength{\tabcolsep}{6pt}
\begin{tabular}{l|ccc|ccc}
\hline
Method
& Clean $\uparrow$
& PGD-20 $\uparrow$
& AA $\uparrow$
& BA $\uparrow$
& BR (AA) $\uparrow$
& Tail-AA $\uparrow$ \\
\hline
Base AT
& 56.10$\pm$0.32
& 28.90$\pm$0.48
& 25.90$\pm$0.50
& 40.60$\pm$0.42
& 17.30$\pm$0.48
& 12.10$\pm$0.58 \\
+ Manifold constraint ($\lambda>0$)
& 56.30$\pm$0.30
& 30.40$\pm$0.46
& 27.60$\pm$0.48
& 42.10$\pm$0.40
& 19.40$\pm$0.45
& 15.80$\pm$0.52 \\
+ Geometric alignment ($\beta>0$)
& 56.50$\pm$0.29
& 31.20$\pm$0.44
& 28.60$\pm$0.46
& 45.30$\pm$0.38
& 21.80$\pm$0.42
& 14.90$\pm$0.50 \\
MCAT (full)
& \textbf{62.30$\pm$0.24}
& \textbf{37.10$\pm$0.40}
& \textbf{34.60$\pm$0.44}
& \textbf{51.80$\pm$0.30}
& \textbf{27.40$\pm$0.36}
& \textbf{20.00$\pm$0.44} \\
\hline
\end{tabular}
\caption{
Ablation study of MCAT components on \textbf{CIFAR-100-LT (IR=100)}.
We report clean accuracy, robust accuracy under PGD-20 and AutoAttack (AA),
balanced accuracy (BA), balanced robustness (BR),
and tail-class robustness under AutoAttack (Tail-AA).
Results are reported as mean$\pm$std over three random seeds.
}
\label{tab:ablation_components}
\end{table*}

\begin{figure*}[t]
  \centering
  \begin{subfigure}[t]{0.22\textwidth}
    \centering
    \includegraphics[width=\linewidth]{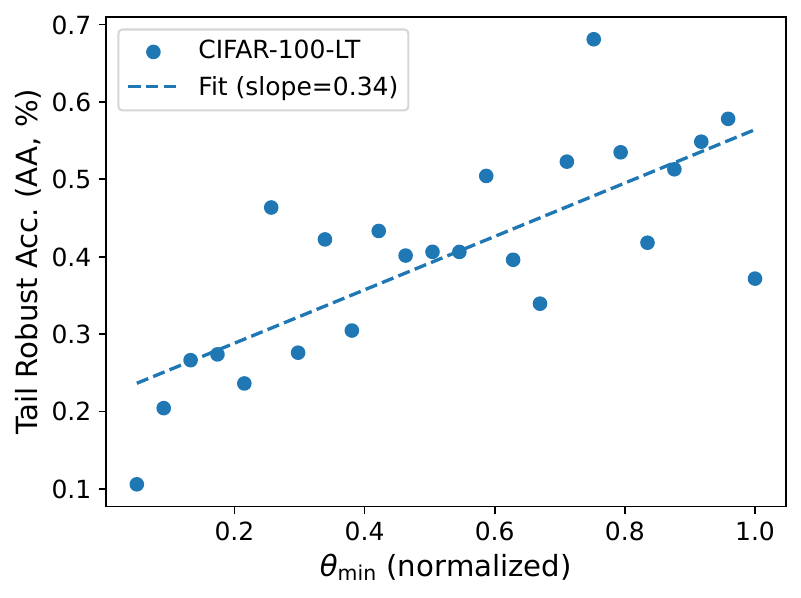}
    \caption{Inter-class angle vs.\ tail robustness}
    \label{fig:theory_angle}
  \end{subfigure}
\hspace{0.04\columnwidth}
  \begin{subfigure}[t]{0.22\textwidth}
    \centering
    \includegraphics[width=\linewidth]{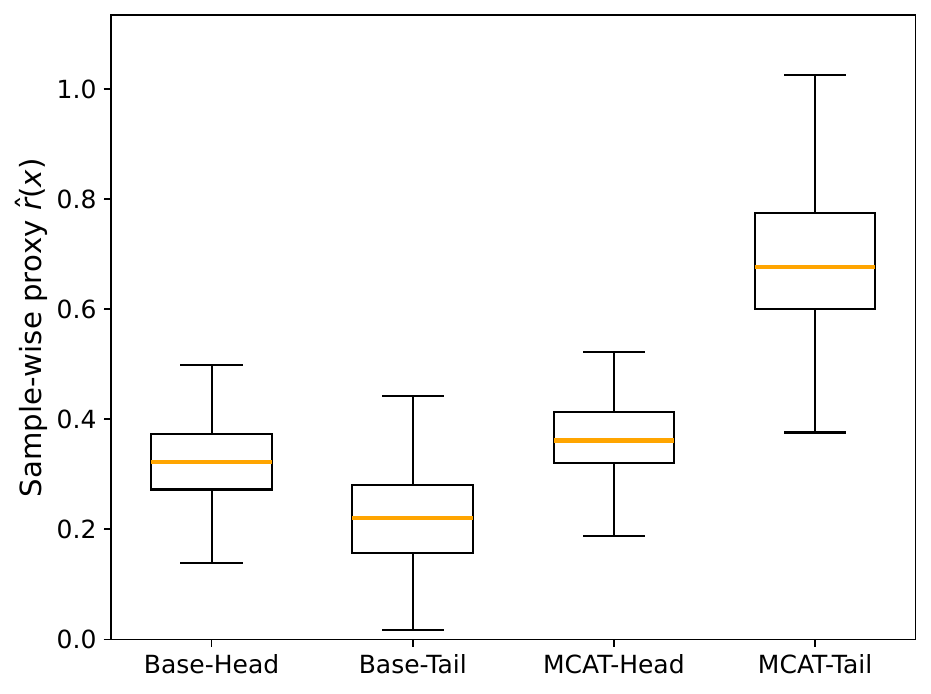}
    \caption{Sample-wise robustness proxy}
    \label{fig:theory_radius}
  \end{subfigure}
\hspace{0.04\columnwidth}
  \begin{subfigure}[t]{0.22\textwidth}
    \centering
    \includegraphics[width=\linewidth]{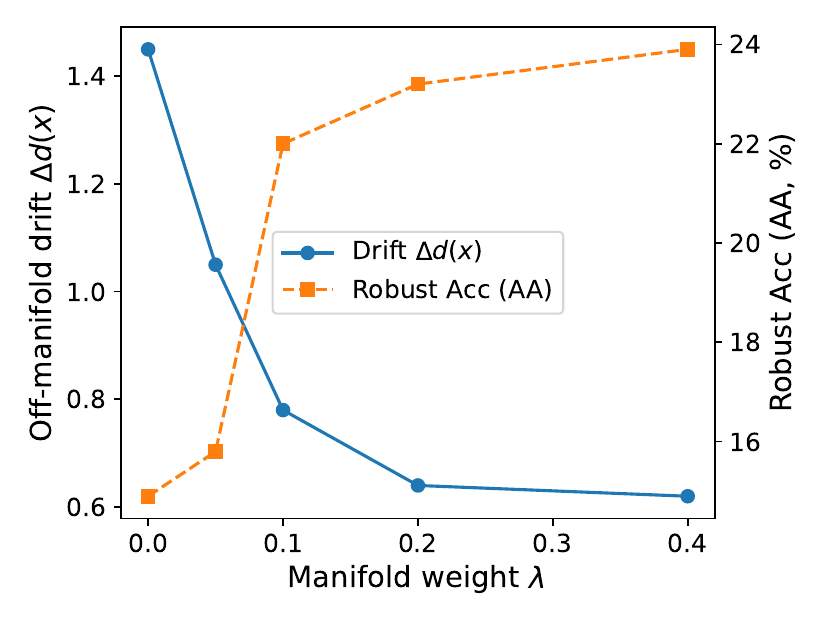}
    \caption{Manifold constraint effects}
    \label{fig:theory_manifold}
  \end{subfigure}
  \caption{
  Theory-aligned empirical evidence on \textbf{CIFAR-100-LT (IR=100)}.
  (a) Larger minimum inter-class angle $\theta_{\min}$ correlates with stronger tail robustness.
  (b) MCAT shifts the tail-class distribution of the sample-wise robustness proxy $\hat r(x)$ toward larger values.
  (c) Increasing the manifold constraint weight $\lambda$ jointly suppresses off-manifold drift and improves robust accuracy.
  }
  \label{fig:theory_aligned}
\end{figure*}

\subsection{RQ2: Tail and Balanced Robustness}
\label{sec:exp_rq2}

\noindent\textbf{Robustness under increasing imbalance severity.}
We evaluate robustness under increasing imbalance severity
by varying the imbalance ratio on CIFAR-100-LT.
Figure~\ref{fig:ir_sweep} summarizes the results on CIFAR-100-LT under AutoAttack.

As imbalance becomes more severe, all methods experience performance degradation.
However, MCAT degrades substantially more gracefully.
In particular, MCAT maintains higher overall AA robustness
and preserves substantially stronger Tail-AA robustness
even under severe imbalance.
Complete numerical results are reported in Tables~\ref{tab:ir_sweep} and \ref{tab:ir_sweep_tail_aa} in Appendix~\ref{sec:appendix_more_results}.

\noindent\textbf{Class-balanced and tail-class robustness.}
Overall robustness metrics can obscure failures on tail classes.
To assess robustness fairness,
Table~\ref{tab:balanced_tail} reports balanced accuracy (BA),
balanced robustness (BR),
and tail-class robustness on CIFAR-100-LT (IR=100).

MCAT achieves the highest BA and BR among all compared methods
and yields substantial gains in tail robustness under both PGD-20 and AA,
indicating that its improvements are not driven solely by head classes
but instead mitigate imbalance-induced bias.
Results on Tiny-ImageNet-LT are deferred to Table~\ref{tab:tiny_balanced_tail} in Appendix~\ref{sec:appendix_more_results}, where MCAT again shows the best results.

\subsection{RQ3: Component Contribution and Sensitivity}
\label{sec:exp_rq3}
%

\noindent\textbf{Component ablations.}
Table~\ref{tab:ablation_components} reports ablation results on CIFAR-100-LT.
Adding the manifold constraint alone yields pronounced improvements
in tail robustness,
highlighting the importance of suppressing off-manifold adversarial drift.
Geometric alignment alone substantially improves BA and BR,
reflecting its role in alleviating imbalance-induced geometric bias.
Combining both components yields the strongest and most consistent gains
across all metrics.

\noindent\textbf{Effect of $\lambda$ (manifold constraint).}
Figures~\ref{fig:lambda_robust} and~\ref{fig:lambda_drift} show that $\lambda$ controls a clear robustness--validity trade-off.
With $\lambda=0$, adversarial examples drift far off the class manifold and tail robustness drops.
Increasing $\lambda$ consistently suppresses drift and improves both overall and tail robustness, with gains saturating beyond a moderate range.

\noindent\textbf{Effect of $\beta$ (geometric alignment).}
Figures~\ref{fig:beta_tail} and~\ref{fig:beta_angle} show that increasing $\beta$ enlarges the minimum inter-class angle $\theta_{\min}$ and improves Tail-AA robustness, with smooth and saturating trends consistent with the margin--robustness relationship in Theorem~\ref{thm:geom_margin}.
This indicates that geometric alignment acts as a stable inductive bias rather than brittle tuning.
As shown in Appendix Fig.~\ref{fig:beta_sweep_panel}, moderate $\beta$ improves tail robustness without degrading head-class performance, while overly large $\beta$ leads to over-regularization.



\subsection{RQ4: Mechanism Verification and Theory Consistency}
\label{sec:exp_rq4}


\noindent\textbf{Imbalance-induced geometric bias and Off-manifold adversarial drift.}
Figure~\ref{fig:exp_geom_alignment} reports geometry diagnostics.
Baseline methods exhibit reduced inter-class angular separation
and increased deviation from a margin-balanced ETF geometry.
In contrast, MCAT preserves larger angular margins
and maintains more balanced decision geometry.
Figure~\ref{fig:exp_manifold_drift} shows distributions of off-manifold drift.
Standard adversarial training induces pronounced drift,
especially for tail classes,
whereas MCAT substantially suppresses drift across all class groups.
In addition to quantitative diagnostics, we provide a qualitative case study by Figure~\ref{fig:appendix_case_study} in Appendix~\ref{sec:appendix_more_results}.
Compared to Base AT, MCAT yields noticeably tighter and better-separated tail-class embeddings while preserving compact head-class structure, offering intuitive evidence of improved geometric balance.

\noindent\textbf{Theory-aligned empirical evidence.}
Figure~\ref{fig:theory_aligned} provides theory-consistent observations:
(i) larger minimum inter-class angles correlate with stronger tail robustness,
(ii) MCAT shifts tail-class distributions of the sample-wise robustness proxy
toward larger values, and
(iii) increasing $\lambda$ jointly suppresses drift and improves robustness.
These results align with Theorems~\ref{thm:geom_margin},
\ref{thm:manifold_bound}, and Corollary~\ref{cor:sample_radius}.
Additional per-class results are deferred to Figure~\ref{fig:appendix_per_class_aa} in Appendix~\ref{sec:appendix_more_results}, where MCAT again shows the best results.

\section{Related Work}

\noindent\textbf{General Long-Tailed Learning.}
Long-tailed learning in the standard setting has been widely studied through data augmentation, training paradigms, and representation rebalancing.
Recent work leverages generative models for tail data synthesis~\cite{zhao2024ltgc,shao2024diffult}, controllable expert-based training~\cite{zhao2024prl}, and feature-space analyses that attribute tail failures to geometric distortion and representation collapse~\cite{yi2025geometrylt,sun2025rethinkretrain,zhou2024ccl}.

\noindent\textbf{Long-Tailed Adversarial Robustness.}
Prior studies show that adversarial training disproportionately harms tail classes under imbalance.
Existing solutions rely on margin or sampling rebalancing~\cite{wu2021adversarial,liu2022breadcrumbs}, staged or reweighted optimization~\cite{li2023reat,yu2025taet}, and robustness distillation~\cite{cho2025longtail}, but do not explicitly regulate feature geometry or semantic validity of adversarial examples.

\noindent\textbf{Geometry and Manifold Structure.}
Neural collapse and simplex ETF analyses highlight the role of balanced geometry in robustness~\cite{papyan2020prevalence,cao2025simplex,kothapalli2022neural,zhu2022balanced}, while manifold-based studies link adversarial vulnerability to off-manifold perturbations~\cite{li2025enhancing,satou2025geometrically,zhang2024manifold}.
Our work integrates these perspectives by jointly enforcing geometric balance and class-conditional manifold constraints for long-tailed adversarial training.

\section{Conclusion}

We proposed MCAT, a unified framework for long-tailed adversarial robustness that combines manifold-constrained adversarial training with ETF-inspired geometry regularization.
We provided theoretical results connecting balanced geometry to robust margins and showing the benefit of constraining adversarial drift away from semantic low-density regions.
Experiments on standard long-tailed benchmarks validate improved balanced robustness and tail performance under standard adversarial attacks.


\clearpage

\bibliographystyle{named}
\bibliography{MCAT}

\clearpage

\appendix

\section{Appendix: Proofs}
\label{app:proof}

\subsection{Notation}

We consider a classifier of the form
\[
f_\Theta(x)=W\phi_\Theta(x),
\]
where $W\in\R^{C\times m}$ is the linear classifier and $\phi_\Theta(x)\in\R^m$ is the feature representation.
Let $w_k$ denote the $k$-th row of $W$, and define the logit score for class $k$ as
\[
s_k(x)=w_k^\top \phi_\Theta(x).
\]

The $\ell_\infty$ adversarial ball is denoted by
\[
\mathcal{B}_\epsilon(x)=\{x'\mid \|x'-x\|_\infty\le\epsilon\}.
\]
The robust risk is
\[
R_{robust}(\theta)
=
\E_{(x,y)\sim\D}\Big[\max_{x'\in\mathcal{B}_\epsilon(x)}\ell(f_\Theta(x'),y)\Big],
\]
where $\ell(\cdot,\cdot)$ denotes the classification loss.


\subsection{Proof of Theorem~\ref{thm:geom_margin}}

\noindent\textbf{Assumptions.}
We assume normalized features and classifier weights, i.e.,
\[
\|\phi_\Theta(x)\|_2=1,
\qquad
\|w_k\|_2=1 \quad \forall k,
\]
which can be enforced without loss of generality.
We further assume that $\phi_\Theta$ is $L$-Lipschitz under $\ell_\infty$ perturbations, as stated in the theorem.

\begin{proof}
Fix a sample $(x,y)$ such that
\[
y=\arg\max_k s_k(x).
\]
Define the logit margin
\[
\gamma(x)=s_y(x)-\max_{k\ne y}s_k(x).
\]

\noindent\textbf{Step 1: Margin lower bound from ETF geometry.}
Since both $w_k$ and $\phi_\Theta(x)$ are unit-norm, we have
\[
s_k(x)=\cos\big(\angle(w_k,\phi_\Theta(x))\big).
\]
Let
\[
k^\star=\arg\max_{k\ne y}s_k(x).
\]
Under the approximate ETF assumption, the angle between $w_y$ and $w_{k^\star}$ is at least $\theta_{min}$.

The configuration minimizing the margin occurs when $\phi_\Theta(x)$ lies in the two-dimensional subspace spanned by $w_y$ and $w_{k^\star}$.
Elementary geometric arguments then yield
\[
\gamma(x)\ge \sin(\theta_{min}/2).
\]

\noindent\textbf{Step 2: Stability under adversarial perturbations.}
Let $x'=x+\delta$ with $\|\delta\|_\infty\le\epsilon$.
By the $L$-Lipschitz assumption,
\[
\|\phi_\Theta(x')-\phi_\Theta(x)\|_2\le L\epsilon.
\]
For any class $k$, we have
\[
|s_k(x')-s_k(x)|
=
|w_k^\top(\phi_\Theta(x')-\phi_\Theta(x))|
\le L\epsilon.
\]
Therefore, the margin at $x'$ satisfies
\[
\gamma(x')
\ge
\gamma(x)-2L\epsilon.
\]

\noindent\textbf{Step 3: Robustness condition.}
Combining the two bounds,
\[
\gamma(x')
\ge
\sin(\theta_{min}/2)-2L\epsilon.
\]
Thus, if
\[
\epsilon<\frac{\sin(\theta_{min}/2)}{L},
\]
the margin remains strictly positive and the predicted label is invariant to all perturbations in $\mathcal{B}_\epsilon(x)$.
This proves the theorem.
\end{proof}

\subsection{Proof of Corollary~\ref{cor:sample_radius}}

\begin{proof}
Fix a sample $(x,y)$ and define the logit score $s_k(x)=w_k^\top\phi_\Theta(x)$.
Let
\[
\gamma(x)=s_y(x)-\max_{k\ne y}s_k(x)
\]
denote the logit margin at $x$.

Consider any perturbation $x'=x+\delta$ with $\|\delta\|_\infty\le r$.
By the $L$-Lipschitz assumption on $\phi_\Theta$,
\[
\|\phi_\Theta(x')-\phi_\Theta(x)\|_2\le Lr.
\]
Assuming $\|w_k\|_2=1$ for all $k$, we have for each class $k$,
\[
|s_k(x')-s_k(x)|
=
|w_k^\top(\phi_\Theta(x')-\phi_\Theta(x))|
\le Lr.
\]
Let $k^\star=\arg\max_{k\ne y}s_k(x)$.
Then
\[
s_y(x')-s_{k^\star}(x')
\ge
\big(s_y(x)-Lr\big)-\big(s_{k^\star}(x)+Lr\big)
=
\gamma(x)-2Lr.
\]
Therefore, if $r\le \gamma(x)/(2L)$, the right-hand side remains non-negative, implying
\[
s_y(x')\ge s_{k^\star}(x')\ge s_k(x')\quad \forall k\ne y,
\]
and the predicted label is unchanged within the $\ell_\infty$ ball of radius $r$.
Thus the sample-wise robust radius satisfies
\[
r(x)\ge \frac{\gamma(x)}{2L}.
\]
\end{proof}

\subsection{Proof of Theorem~\ref{thm:manifold_bound}}

\noindent\textbf{Assumptions.}
We assume that the per-sample loss is bounded,
\[
0\le \ell(f_\Theta(x),y)\le \ell_{max},
\]
and that for each class $y$, the data distribution is supported on a semantic manifold $\mathcal{M}_y$ in feature space, while regions far from $\mathcal{M}_y$ carry negligible probability mass.

\begin{proof}
We formalize the intuition described in the main text.
Under long-tailed distributions, adversarial optimization may place excessive emphasis on perturbations whose features drift far away from the semantic manifold $\mathcal{M}_y$ of a tail class.
Although such off-manifold perturbations can induce high classification loss, they lie in low-density regions that are weakly supported by the data distribution and therefore do not meaningfully contribute to robustness on the semantic support.

Fix a sample $(x,y)$ and consider the inner maximization in the robust risk.
Let
\[
x^\star
=
\arg\max_{x'\in\mathcal{B}_\epsilon(x)}\ell(f_\Theta(x'),y).
\]

\noindent\textbf{Step 1: Decomposition of robust risk.}
To make the above distinction precise, we decompose the robust risk into on-manifold and off-manifold contributions:
\[
R_{robust}(\Theta)
=
R_{on}(\Theta)
+
R_{off}(\Theta),
\]
where $R_{on}$ corresponds to adversarial examples whose features remain within a neighborhood of the class manifold $\mathcal{M}_y$, and $R_{off}$ corresponds to adversarial examples whose features drift far away from $\mathcal{M}_y$.

By the manifold support assumption, off-manifold regions contribute negligible probability mass.
Therefore, there exists a constant $\rho\ll1$ such that
\[
R_{off}(\Theta)\le \rho\,\ell_{max}.
\]

\noindent\textbf{Step 2: Control of on-manifold risk via manifold-constrained objective.}
Recall the MCAT objective without the geometric regularizer:
\[
\begin{aligned}
R_{MCAT}(\Theta)
=
\E_{(x,y)\sim\D}\Big[
\max_{\|\delta\|_\infty\le\epsilon}\big(
\ell(f_\Theta(x+\delta),y) \\
\qquad\qquad\qquad\qquad
+ \lambda d_{\mathcal{M}_y}(\phi_\Theta(x+\delta))
\big)\Big].
\end{aligned}
\]
For adversarial examples whose features lie within a bounded neighborhood of $\mathcal{M}_y$, the manifold deviation term is uniformly bounded:
\[
d_{\mathcal{M}_y}(\phi_\Theta(x'))\le C.
\]
As a result, for such on-manifold adversarial points,
\[
\begin{aligned}
\ell(f_\Theta(x'),y)
&\le
\ell(f_\Theta(x'),y)+\lambda d_{\mathcal{M}_y}(\phi_\Theta(x')) \\
&\le
R_{MCAT}(\Theta)+\frac{C}{\lambda}.
\end{aligned}
\]

\noindent\textbf{Step 3: Combining the bounds.}
Taking expectation over $(x,y)$ and combining the on-manifold and off-manifold contributions yields
\[
R_{robust}(\Theta)
\le
R_{MCAT}(\Theta)
+
\frac{C}{\lambda}
+
\rho\,\ell_{max}.
\]
Since $\rho$ is negligible under the manifold support assumption, we conclude that
\[
R_{robust}(\Theta)\le R_{MCAT}(\Theta)+O(\lambda^{-1}),
\]
which completes the proof.
\end{proof}

\section{Generator Architecture}
\label{sec:appendix_generator}
Each class-conditional generator $G_y$ is implemented as a lightweight multilayer perceptron operating in the classifier feature space.
Given a latent code $z \in \mathbb{R}^{d_z}$ sampled from a standard Gaussian, $G_y$ outputs a feature vector in $\mathbb{R}^{d_f}$, where $d_f$ is the dimension of the penultimate-layer features of the backbone.

Concretely, $G_y$ consists of three fully connected layers with widths
$d_z \rightarrow 1024 \rightarrow 1024 \rightarrow d_f$.
ReLU activations are applied after the first two layers, and the output layer is linear.
No batch normalization or dropout is used.
All generators share the same architecture but are trained independently for each class.
Unless otherwise specified, we set $d_z=128$ and $d_f=512$ for CIFAR-based experiments.

\section{More Experimental Results}
\label{sec:appendix_more_results}
\begin{figure}[H]
  \centering
  \includegraphics[width=0.9\columnwidth]{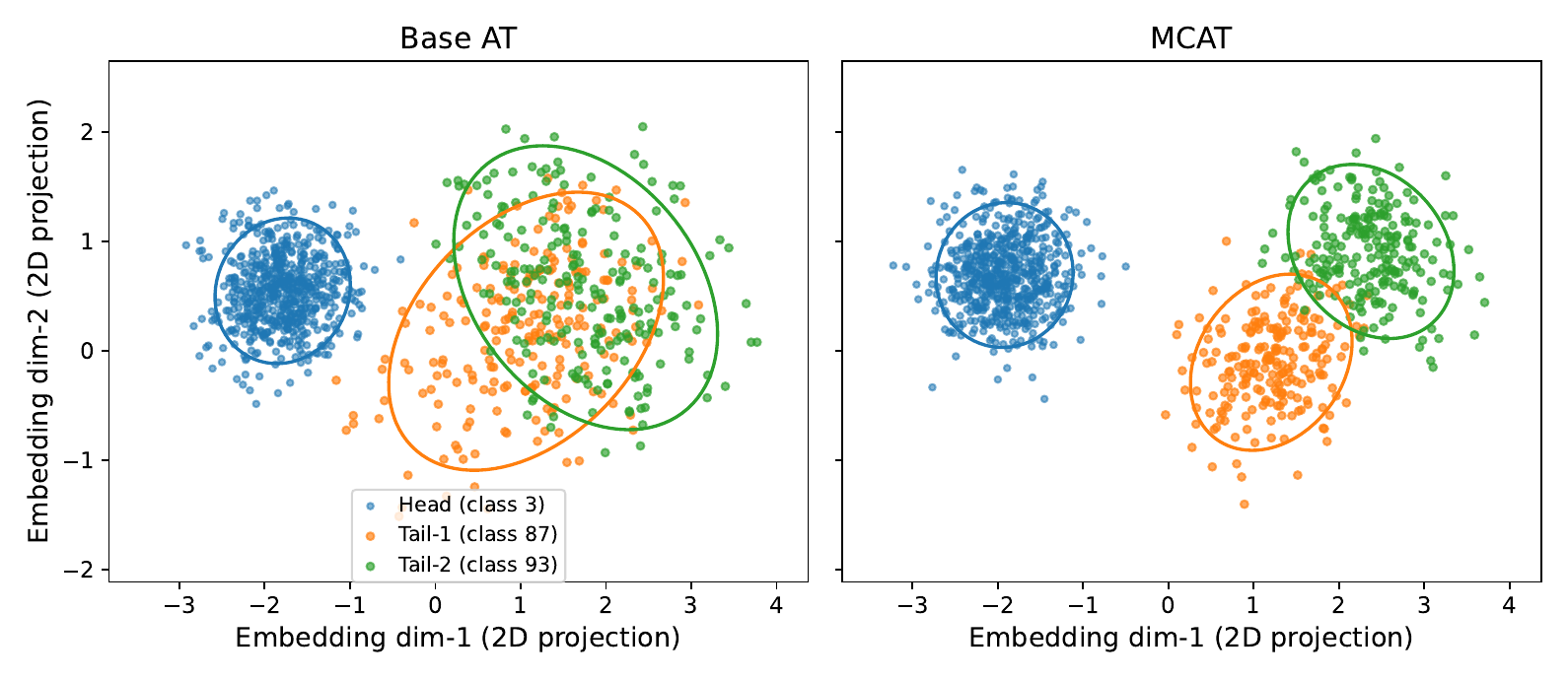}
\caption{
Case study on CIFAR-100-LT (IR=100) showing 2D embedding projections of one head and two tail classes.
Compared to Base AT, MCAT yields tighter and better-separated tail clusters while maintaining compact head representations.
}
  \label{fig:appendix_case_study}
\end{figure}

\begin{figure}[H]
  \centering
  \includegraphics[width=0.75\columnwidth]{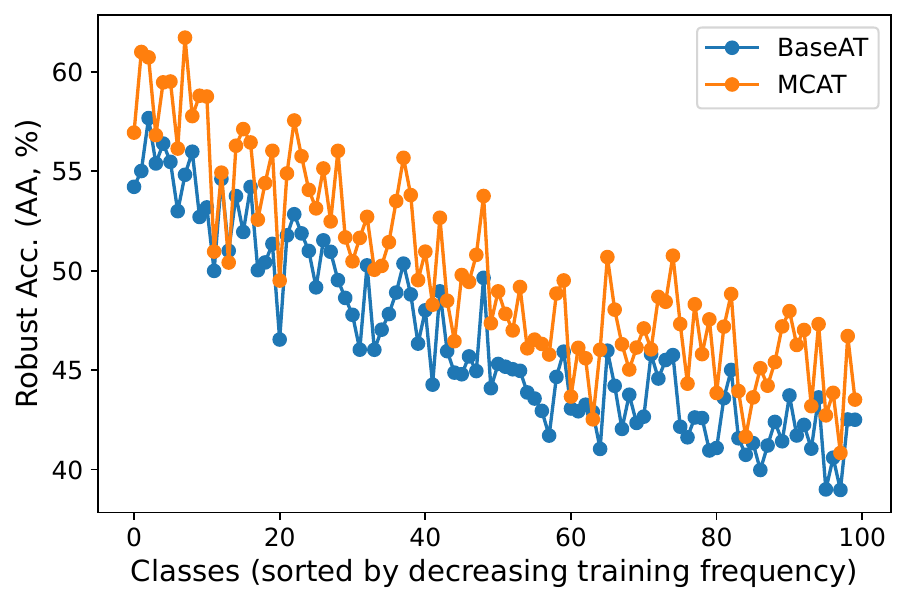}
  \caption{
Robust accuracy over all classes under AutoAttack (AA) on CIFAR-100-LT (IR=100), plotted against the sorted class frequency rank.
}
  \label{fig:appendix_per_class_aa}
\end{figure}

\begin{figure}[H]
\centering
\begin{subfigure}[t]{0.48\linewidth}
  \centering
  \includegraphics[width=\linewidth]{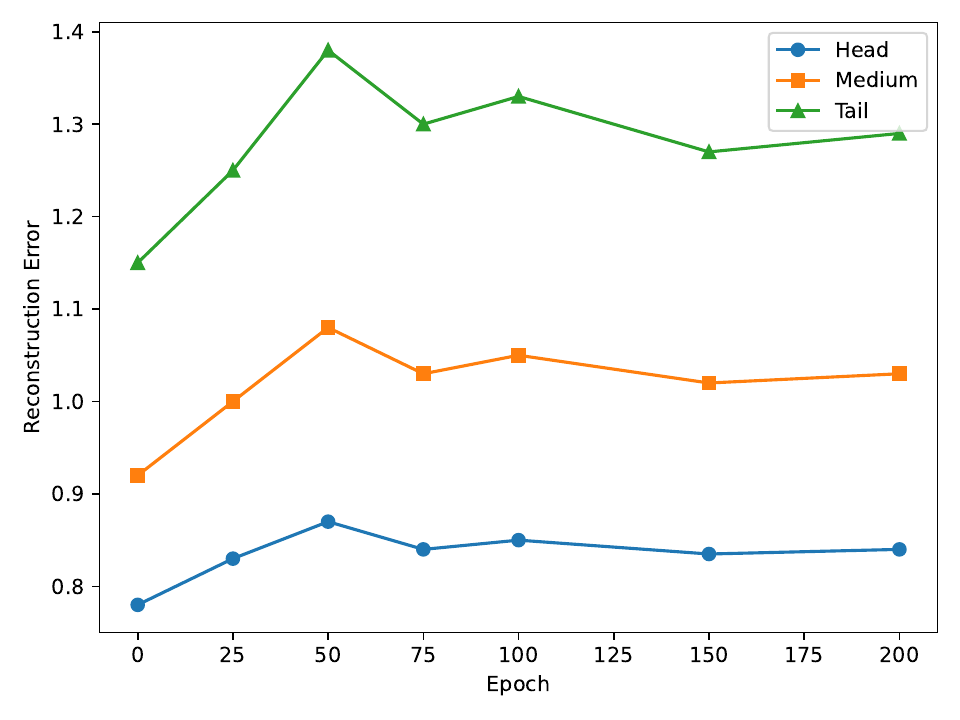}
  \caption{Reconstruction error $\|\phi_\Theta(x)-G_y(z^\star)\|_2$ over training epochs on CIFAR-100-LT (IR=100).
  The error exhibits mild non-monotonic fluctuations due to feature adaptation,
  but remains stable overall and does not diverge for tail classes.}
  \label{fig:recon_error_panel}
\end{subfigure}
\hfill
\begin{subfigure}[t]{0.48\linewidth}
  \centering
  \includegraphics[width=\linewidth]{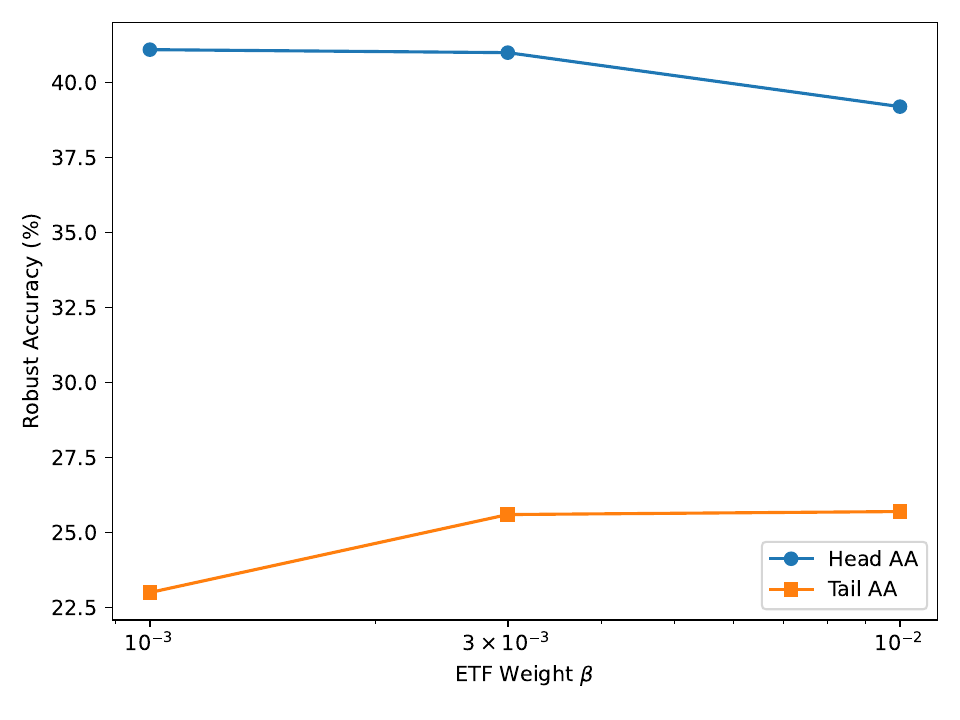}
  \caption{Effect of ETF regularization weight $\beta$ on robust accuracy for head and tail classes.
  Moderate $\beta$ improves tail robustness without degrading head performance,
  while excessively large $\beta$ leads to over-regularization.}
  \label{fig:beta_sweep_panel}
\end{subfigure}
\caption{Manifold stability and geometric alignment trade-off on CIFAR-100-LT (IR=100).
Left: reconstruction error of frozen class-conditional generators remains stable throughout adversarial training.
Right: ETF-inspired geometric alignment improves tail robustness without sacrificing head-class performance under moderate regularization.}
\label{fig:appendix_panel}
\end{figure}

\begin{table}[H]
\centering
\small
\setlength{\tabcolsep}{6pt}
\begin{tabular}{c|cccc}
\hline
Method & IR=10 & IR=20 & IR=50 & IR=100 \\
\hline
RoBal
& 31.80 & 29.10 & 26.40 & 25.90 \\

Self-Distill
& 33.40 & 31.20 & 29.10 & 28.10 \\

AT-BSL
& 32.90 & 30.80 & 28.70 & 27.30 \\
\hline
MCAT (ours)
& \textbf{34.60} & \textbf{33.10} & \textbf{32.00} & \textbf{31.50} \\
\hline
\end{tabular}
\caption{
AutoAttack robustness (\%) on CIFAR-100-LT under varying imbalance ratios.
MCAT exhibits consistently higher robustness and degrades more gracefully
as imbalance severity increases.
}
\label{tab:ir_sweep}
\end{table}

\begin{table}[H]
\centering
\small
\setlength{\tabcolsep}{6pt}
\begin{tabular}{c|cccc}
\hline
Method & IR=10 & IR=20 & IR=50 & IR=100 \\
\hline
RoBal
& 19.60 & 17.40 & 14.80 & 13.20 \\

Self-Distill
& 21.30 & 19.50 & 17.40 & 16.10 \\

AT-BSL
& 20.80 & 18.90 & 16.90 & 15.50 \\
\hline
MCAT (ours)
& \textbf{26.20} & \textbf{23.60} & \textbf{22.10} & \textbf{21.05} \\
\hline
\end{tabular}
\caption{
Tail-class AutoAttack robustness (\%) on CIFAR-100-LT under varying imbalance ratios.
MCAT consistently improves tail robustness and degrades more gracefully
as imbalance severity increases.
}
\label{tab:ir_sweep_tail_aa}
\end{table}

\begin{table}[H]
\centering
\small
\begin{tabular}{c|ccc}
\hline
$T_z$ & Overall AA (\%) & Tail AA (\%) & Relative Train Time \\
\hline
0 & 32.4 & 18.7 & 1.00 \\
1 & 34.9 & 21.5 & 1.07 \\
3 & 36.8 & 24.3 & 1.14 \\
5 & \textbf{37.5} & \textbf{25.6} & 1.18 \\
8 & 37.6 & 25.7 & 1.26 \\
\hline
\end{tabular}
\caption{Sensitivity to the number of latent optimization steps $T_z$ on CIFAR-100-LT (IR=100).}
\label{tab:tz_sensitivity}
\end{table}

\begin{table}[H]
\centering
\fontsize{7.3}{9}\selectfont
\setlength{\tabcolsep}{6pt}
\begin{tabular}{l|cc|cc}
\hline
Method
& BA $\uparrow$
& BR (AA) $\uparrow$
& Tail-PGD $\uparrow$
& Tail-AA $\uparrow$ \\
\hline
PGD-AT   & 31.20$\pm$0.60 & 11.40$\pm$0.65 &  9.30$\pm$0.70 &  8.10$\pm$0.75 \\
TRADES   & 32.10$\pm$0.58 & 12.80$\pm$0.62 & 10.20$\pm$0.68 &  8.90$\pm$0.73 \\
MART     & 31.80$\pm$0.59 & 13.20$\pm$0.63 & 10.60$\pm$0.69 &  9.20$\pm$0.74 \\
AWP      & 32.90$\pm$0.55 & 14.10$\pm$0.60 & 11.30$\pm$0.65 & 10.10$\pm$0.70 \\
\hline
RoBal        & 35.60$\pm$0.52 & 15.90$\pm$0.58 & 12.80$\pm$0.62 & 11.20$\pm$0.67 \\
REAT         & 36.80$\pm$0.50 & 17.10$\pm$0.56 & 13.90$\pm$0.60 & 12.30$\pm$0.65 \\
TAET         & 37.40$\pm$0.48 & 17.80$\pm$0.55 & 14.60$\pm$0.59 & 12.90$\pm$0.64 \\
Self-Distill & 37.90$\pm$0.49 & 18.20$\pm$0.54 & 15.00$\pm$0.58 & 13.40$\pm$0.63 \\
AT-BSL       & 37.60$\pm$0.50 & 17.90$\pm$0.55 & 14.70$\pm$0.59 & 13.10$\pm$0.64 \\
\hline
MCAT (ours)  & \textbf{42.30$\pm$0.45} & \textbf{22.60$\pm$0.50} & \textbf{18.90$\pm$0.54} & \textbf{16.80$\pm$0.58} \\
\hline
\end{tabular}
\caption{
Balanced and tail robustness on \textbf{Tiny-ImageNet-LT (IR=100)}.
We report balanced accuracy (BA) and balanced robustness (BR),
defined as average per-class accuracy under clean evaluation and AutoAttack (AA), respectively,
together with tail-class robust accuracy under PGD-20 (Tail-PGD) and AutoAttack (Tail-AA).
Results are reported as mean$\pm$std over three random seeds.
}
\label{tab:tiny_balanced_tail}
\end{table}


\section{Hyperparameter Settings}
\label{app:hyper}

\begin{table}[t]
\centering
\fontsize{6.8}{6.5}\selectfont
\setlength{\tabcolsep}{2pt}
\begin{tabularx}{\linewidth}{l X X}
\toprule
\textbf{Category} & \textbf{Hyperparameter} & \textbf{Value} \\
\midrule
\multirow{6}{*}{Training}
 & Optimizer & SGD with momentum \\
 & Momentum & 0.9 \\
 & Weight decay & $5\times10^{-4}$ \\
 & Batch size & 128 \\
 & Training epochs & 200 \\
 & Learning rate schedule & Cosine decay \\
\midrule
\multirow{4}{*}{Adversarial Setup}
 & Threat model & $\ell_\infty$ \\
 & Perturbation budget $\epsilon$ & $8/255$ \\
 & Step size $\alpha$ & $2/255$ \\
 & PGD steps & 10 (train), 20 (eval) \\
\midrule
\multirow{3}{*}{Backbone}
 & Architecture & ResNet-18 \\
 & Initialization & He initialization \\
 & Normalization & BatchNorm \\
\midrule
\multirow{4}{*}{Long-tailed Setting}
 & Imbalance type & Exponential \\
 & Imbalance ratio (IR) & $\{10, 50, 100\}$ \\
 & Sampling strategy & Class-uniform \\
 & Evaluation metric & Overall / Many / Medium / Few \\
\midrule
\multirow{6}{*}{MCAT (Ours)}
 & Manifold penalty weight $\lambda_{\mathrm{man}}$ & 0.1 \\
 & Geometric alignment weight $\beta$ & $3\times10^{-3}$ \\
 & Equivalent $\lambda_{\mathrm{geom}}$ in implementation & 0.01 \\
 & Manifold distance metric & $\ell_2$ in feature space \\
 & Manifold update frequency & Every iteration \\
 & ETF target dimension & Equal to number of classes \\
\midrule
\multirow{5}{*}{Baselines}
 & AT / TRADES & Official recommended settings \\
 & RoBal & Margin reweighting as in~\cite{wu2021adversarial} \\
 & REAT & Loss reweighting as in~\cite{li2023reat} \\
 & TAET & Two-stage schedule as in~\cite{yu2025taet} \\
 & Distillation-based & Teacher trained on balanced AT \\
\bottomrule
\end{tabularx}
\caption{Hyperparameter settings for MCAT and baseline adversarial training methods.
$\beta$ denotes the ETF-inspired geometric alignment weight used in Eq.~(3).
Unless otherwise specified, all methods share the same backbone architecture and training protocol.}
\label{tab:hyperparams}
\end{table}

\end{document}